\newcommand{\RomanNumeralCaps}[1]
    {\MakeUppercase{\romannumeral #1}}
\theoremstyle{plain}% default
\newtheorem{theorem}{Theorem}
\newtheorem{assumption}{Assumption}
\title{Distributed Graph Neural Network Training with Periodic Stale Representation Synchronization}
\author{
  Zheng Chai\thanks{Equal contribution.}\\
  Department of Computer Science\\
  George Mason University\\
  Fairfax, VA \\
  \texttt{zchai2@gmu.edu} \\
  \And
 Guangji Bai$^{*}$  \\
  Department of Computer Science\\
  Emory University\\
  Atlanta, GA \\
  \texttt{guangji.bai@emory.edu} \\
  \And
 Liang Zhao \\
  Department of Computer Science\\
  Emory University\\
  Atlanta, GA \\
  \texttt{liang.zhao@emory.edu} \\
     \And
 Yue Cheng\thanks{Corresponding author.} \\
  Department of Computer Science\\
  George Mason University\\
  Fairfax, VA \\
  \texttt{yuecheng@gmu.edu} \\
}
\begin{document}

\maketitle

\vspace{-10pt}
\begin{abstract}
\vspace{-5pt}
  Despite the recent success of Graph Neural Networks (GNNs), it remains challenging to train a GNN on large graphs with over millions of nodes \& billions of edges, which are prevalent in many graph-based applications such as social networks, recommender systems, and knowledge graphs. Traditional sampling-based methods accelerate GNN training by dropping edges and nodes, which impairs the graph integrity and model performance. Differently, distributed GNN algorithms accelerate GNN training by utilizing multiple computing devices and can be classified into two types: "\emph{partition-based}" methods enjoy low communication cost but suffer from information loss due to dropped edges, while "\emph{propagation-based}" methods avoid information loss but suffer from prohibitive communication overhead caused by neighbor explosion. To jointly address these problems, this paper proposes DIGEST (DIstributed Graph reprEsentation SynchronizaTion), a novel distributed GNN training framework that synergizes the complementary strength of both categories of existing methods. 
  %of each subgraph, 
  %neither totally discarding all the nodes in other subgraphs (which leads to information loss) nor updating all of them (which leads to communication cost), 
We propose to allow each device utilize the stale representations of its neighbors in other subgraphs during subgraph parallel training.
This way, out method preserves global graph information from neighbors to avoid information loss and reduce the communication cost. 
Therefore, DIGEST is both computation-efficient and communication-efficient as it does not need to frequently (re-)compute and transfer the massive representation data across the devices, due to neighbor explosion. DIGEST provides synchronous and asynchronous training manners for homogeneous and heterogeneous training environment, respectively.
%Therefore, our method does not discard any neighbors in other subgraphs (which leads to information loss), nor does it updates them intensively (which leads to communication cost).
%This effectively avoids (1) the intensive computation on explosively-increasing neighbors and (2) excessive communications across different devices. 
We proved that the approximation error induced by the staleness of the representations can be upper-bounded. More importantly, our convergence analysis demonstrates that DIGEST enjoys the state-of-the-art convergence rate. Extensive experimental evaluation on large, real-world graph datasets shows that 
DIGEST achieves up to $21.82\times$ speedup without compromising the performance compared to state-of-the-art distributed GNN training frameworks. 
\end{abstract}

\section{Introduction}
\vspace{-5pt}

\emph{Graph Neural Networks} (GNNs) have shown impressive success in analyzing non-Euclidean graph data and have achieved promising results in various applications, including social networks, recommender systems and knowledge graphs, etc.~\cite{dai2016discriminative,ying2018graph,eksombatchai2018pixie,lei2019gcn,zhu2019aligraph}. Despite the great promise of GNNs, they meet significant challenges when being applied to large graphs, which are common in real world---the number of nodes of a large graph can be up to millions or even billions. For instance, Facebook social network graph contains over 2.9~billion users and over 400~billion friendship relations among users\footnote{\url{https://backlinko.com/facebook-users}}. Amazon provides recommendations over 350~million items to 300~million users\footnote{\url{https://amzscout.net/blog/amazon-statistics}}. Further, natural language processing (NLP) tasks take advantage of knowledge graphs, such as Freebase~\cite{chah2017freebase} with over 1.9~billion triples. 
Training GNNs on large graphs is jointly challenged by the lack of inherent parallelism
%the low-concurrency 
in the backpropagation optimization and heavy inter-dependencies
%the inter-correlation 
among graph nodes, rendering existing parallel techniques inefficient.
%preventing existing acceleration techniques from  easily applied. 
%To tackle the uniqueness of GNN training challenges, 
To tackle the unique challenges in GNN training,
%scalable 
distributed GNN training is a promising open domain that has attracted fast-increasing attention in recent years. A classic and intuitive way is by \emph{sampling}. Until now, a good number of graph-sampling-based GNN methods have been proposed, including neighbor-sampling-based methods (e.g., GraphSAGE~\cite{hamilton2017inductive}, VR-GCN~\cite{chen2018stochastic}) and subgraph-sampling-based methods (e.g., Cluster-GCN~\cite{chiang2019cluster}, GraphSAINT~\cite{zeng2019graphsaint}). 
%\yuec{In our eval we haven't really evaluated GNN training over a truly distributed cluster---all our tests are on a single machine. Need fix here.}
These methods enable a GNN model to be trained over large graphs on a single machine
%on a single machine 
by sampling a subset of data during forward or backward propagation. 
%However, 
While sampling operations reduce the size of data needed for computation, these methods suffer from degenerated performance due to unnecessary information loss. To walk around this drawback and also to leverage the increasingly powerful computing capability of modern hardware accelerators,
%and technical advancement in high-performance-computing, 
recent solutions propose to train GNNs on a large number of CPU and GPU devices~\cite{dorylus_osdi21,ramezani2021learn,wan2022pipegcn} 
% using pipeline parallelism~\cite{dorylus_osdi21}.
%focus on increasing the number of computing devises and training GNNs using model parallelism in a distributed setup~\cite{x,y}.\textcolor{red}{add citations...}
%and distributed computing, 
%which have 
% Distributed GNN training has become the \emph{de facto} standard for fast and accurate training over large graphs.
and have become the \emph{de facto} standard for fast and accurate training over large graphs.
% for Computer Vision and NLP~\cite{konevcny2016federated,bonawitz2019towards,lin2021fednlp}. 

Existing methods in distributed training for GNNs can be classified into two categories, namely "\emph{partition-based}" and "\emph{propagation-based}", by how they tackle the trade-off between \emph{computation/communication cost} and \emph{information loss}. "Partition-based" methods~\cite{angerd2020distributed,jia2020improving,ramezani2021learn} partition the graph into different subgraphs by dropping the edges across subgraphs. This way, the GNN training on a large graph is decomposed into many smaller training tasks, each trained in a siloed subgraphs in parallel, reducing communications among subgraphs, and thus, tasks, due to edge dropping.
%where communication among different tasks are avoided due to edge dropping. 
However, this will result in severe information loss due to the ignorance of the dependencies among nodes across subgraphs and cause performance degeneration. 
To 
%tackle this issue and 
alleviate information loss, "propagation-based" methods~\cite{ma2019neugraph,zhu2019aligraph,zheng2020distdgl,tripathy2020reducing,wan2022pipegcn} do not ignore edges across different subgraphs with neighbor communications
%by communicating neighbor nodes 
%between each 
among subgraphs to satisfy GNN's neighbor aggregation. However, the number of neighbors involved in neighbor aggregation grows exponentially as the GNN goes deeper (i.e., \emph{neighborhood explosion}~\cite{hamilton2017inductive}), hence inevitably suffering huge communication overhead and plagued training efficiency.

% Currently, distributed GNNs training suffers from the following challenges: \textbf{1). Partitioning a graph into subgraphs and ignoring the node dependency between the subgraphs leads to severe information loss and degenerated performance.} First category of existing work, i.e., generalizing the existing data parallelism techniques directly onto graph data, reduce or eliminate the necessary communication cost between each subgraph but suffer from the information loss and lower accuracy. \textbf{2). Swapping node information between different subgraphs leads to huge communication overhead due to neighbor explosion problem.} Second category of existing work, i.e., do not drop edges between the subgraphs via communication of the neighbor node information across different subgraphs, reduce or eliminate the information loss but suffer from inevitable communication overhead due to neighbor explosion. \textbf{3). Difficulty in theoretical guarantee on distributed GNN training.} There have been fruitful theoretical analyses on distributed training of DNNs on \emph{i.i.d} data, e.g., convergence guarantee (add reference), but the counterpart in distributed GNNs is still underexplored.

Therefore, although "partition-based" methods can parallelize a training job among partitioned subgraphs,
%decompose the large raw graph into smaller ones and train in parallel, 
they suffers from information loss and low accuracy. "Propagation-based" methods, on the other hand, use the entire graph for training without information loss but suffer from huge communication overhead and poor efficiency. 
Hence, it is highly imperative to develop a method that can jointly address the problems of high communication cost and severe information loss. Moreover, theoretical guarantees (e.g., on convergence, approximation error) are not well explored for distributed GNN training due to the joint sophistication of graph structure and neural network optimization.

To address the aforementioned challenges, we propose a novel distributed GNN training framework that synergizes the complementary strengths of both partitioning-based and propagating-based methods, named \underline{\textbf{DI}}stributed \underline{\textbf{G}}raph repr\underline{\textbf{E}}sentation \underline{\textbf{S}}ynchroniza\underline{\textbf{T}}ion, or \textbf{DIGEST}. 
%During parallel training of each subgraph, neither totally discarding all the nodes in other subgraphs (which leads to information loss) nor updating all of them (which leads to communication cost), 
DIGEST does not completely discard node information from other subgraphs in order to avoid unnecessary information loss; DIGEST does not frequently update all the node information in order to minimize communication costs. Instead, DIGEST extends the idea of single-GPU-based GCN training with stale representations~\cite{chen2018stochastic, fey2021gnnautoscale} to a distributed setting, by enabling each device to efficiently exchange a relatively stale version of the neighbor representations from other subgraphs, to achieve scalable and high-performance GNN training.
%by using each device to receive the stale representations of neighbors from other subgraphs.} 
This effectively avoids neighbor updating explosion and reduces communication costs across training devices. Considering naive synchronous distributed training that inherently lacks the capability of handling stragglers caused by training environment heterogeneity (e.g. GPU resource heterogeneity), we further design an asynchronous version of DIGEST (DIGEST-A), where each subgraph follows a non-blocking training manner. The synchronous version is a natural generalization of~\cite{fey2021gnnautoscale} while the asynchronous version can handle the straggler issue~\cite{chen2016revisiting,zheng2017asynchronous} in synchronous version and enjoys even better performance.
From the system aspect, DIGEST (1)~enables efficient, cross-device representation exchanging by using a shared-memory key-value storage (KVS) system, (2)~supports both synchronous and asynchronous parameter updating, and (3)~overlaps the computation (layer training) with I/Os (pushing/pulling representations to/from the KVS. 

Furthermore, we proved that the approximation error induced by the staleness of representation can be bounded. More importantly, global convergence guarantee is provided, which demonstrates that DIGEST has the state-of-the-art convergence rate. Our main contributions can be summarized as:
%\begin{itemize}
\begin{itemize}[noitemsep,leftmargin=*]
\vspace{-3mm}
    \item \textbf{Proposing a novel distributed GNN training framework that synergizes the benefits of partition-based and communication-based methods.} Existing work in distributed GNN training focus on two \emph{contradictory} objectives: partition-based methods target minimizing the communication cost while propagation-based methods aim to minimize information loss.
    %\emph{different but contradictory} objectives. Partition-based methods focus on minimizing the communication cost among different subgraphs while propagation-based methods aim at preserving information of the raw graph data. 
    DIGEST drops \emph{no} edges while avoiding communication overhead by integrating the strengths of both categories.
    \item \textbf{Developing a periodic stale representation synchronization technique for distributed GNN training.} DIGEST utilizes the entire graph for training by separating in- and out-of-subgraph neighbor nodes and approximating the latter with stale representations. Instead of making strictly synchronous pull/push operations for the representations of all layers before/after training, DIGEST overlaps pull/push operations with layer training to minimize the overall training time. Furthermore, a shared-memory-based KVS is used among subgraphs for efficiently exchanging representations.
    
    % Instead of issuing frequent exchanges of representations between the subgraphs during training, DIGEST 
    % pulls/pushes stale representations before or after the training periodically to keep the communication costs low.
    % \item \textcolor{orange}{@Zheng, please mention our novelty in the newly-added asynchronous DIGEST and any other system design. Not sure if we want to merge this part into the previous bullet or as a new one.}
    \item \textbf{Providing extensive theoretical guarantee on both performance and convergence of the proposed algorithm.} We proved that DIGEST's convergence rate is $\mathcal{O}(T^{-2/3}M^{-1/3})$ with $T$ iterations and $M$ subgraphs, which is close to vanilla distributed GNN training without staleness. Convergence guarantee for both synchronous and asynchronous versions of DIGEST is provided. We also showed the upper bound on the approximation error of gradients due to the staleness. 
    \item \textbf{Conducting comprehensive empirical results on both performance and speedup.} We perform extensive evaluation on four benchmark with classic GNNs (e.g., GCN~\cite{hamilton2017inductive} and GAT~\cite{velivckovic2017graph}). The experimental results show that for the best case DIGEST improves the performance by $\textbf{33.14}\%$, and achieves $\textbf{21.82}\times$ speedup in training time compared to two state-of-the-art distributed GNNs training frameworks.
\end{itemize}

\vspace{-6pt}
\section{Background and Problem Formulation}
\label{sec:background}
\vspace{-4pt}

% \textbf{Graph Neural Networks.} GNNs aim to learn a function of signals/features on a graph $\mathcal{G(V, E)}$ which takes as input: \textbf{1)} a set of node features $\mathbf{X}\in\mathbb{R}^{\vert\mathcal{V}\vert\times d}$ where $d$ denotes the node feature dimension, and \textbf{2)} a representative description of the graph structure, typically in the form of an adjacency matrix $\mathbf{A}$, and produces a node-level output $\mathbf{H}^{(L)}\in\mathbb{R}^{\vert\mathcal{V}\vert\times k}$, where $k$ is the output feature dimension per node.

%\yuec{Do we need a separate related work section?}

In this section, we first introduce the Graph Neural Network (GNN) and its training on a single machine, and then formulate the problem of distributed GNN training.

\textbf{Graph Neural Networks.} GNNs aim to learn a function of signals/features on a graph $\mathcal{G(V, E)}$ with node representations $\mathbf{X}\in\mathbb{R}^{\vert\mathcal{V}\vert\times d}$, where $d$ denotes the node feature dimension. For typical semi-supervised node classification tasks~\cite{kipf2016semi}, where each node $v\in\mathcal{V}$ is associated with a label $\mathbf{y}_v$, a $L$-layer GNN $f$ is trained to learn the node representation $\mathbf{h}_v$ such that $\mathbf{y}_v$ can be predicted accurately. The training process of a GNN can be practically described as the node representation learning based on the \emph{message passing mechanism}~\cite{gilmer2017neural}. Analytically, given a graph $\mathcal{G(V, E)}$ and a node $v\in\mathcal{V}$, the $(\ell+1)$-th layer of the GNN is defined as
% \vspace{-1mm}
\begin{equation}
\label{eq:gnn}
\begin{split}
    \mathbf{h}_{v}^{(\ell+1)} = f^{(\ell+1)} \Big(\mathbf{h}_{v}^{(\ell)}, \big\{ \mathbf{h}^{(\ell)}_u : u \in \mathcal{N}(v) \big\} \Big) = \Psi^{(\ell+1)}\Big( \mathbf{h}_{v}^{(\ell)},  \Phi^{(\ell+1)}\Big(\big\{\mathbf{h}^{(\ell)}_u :  u \in \mathcal{N}(v) \big\}\Big) \Big),
\end{split}
\vspace{-1mm}
\end{equation}
where $\mathbf{h}_{v}^{(\ell)}$ denotes the representation of node $v$ in the $\ell$-th layer, and $\mathbf{h}^{(0)}_{v}$ being initialized to $\mathbf{x}_v$ ($v$-th row in $\mathbf{X}$), and $\mathcal{N}(v)$ represents the set of direct $1$-\emph{hop} neighbors for node $v$. Each layer of the GNN, i.e.  $f^{(\ell)}$, can be further decomposed into two components: 1) Aggregation function $\Phi^{(\ell)}$, which takes the nodes representations of node $v$'s neighbors as input, and output the aggregated neighborhood representation. 2) Updating function $\Psi^{(\ell)}$, which combines the representation of $v$ and the aggregated neighborhood representation to update the representation of node $v$ for the next layer. Both $\Phi^{(\ell)}$ and $\Psi^{(\ell)}$ can choose to use various functions in different types of GNNs. 
% For example, GraphSAGE \cite{hamilton2017inductive} samples a subset of neighbors with certain size and provides three distinct aggregation algorithms:  mean aggregator, LSTM aggregator, and pooling aggregator, while GCN \cite{kipf2016semi} performs weighted averaging on all neighbor nodes for aggregation.
To train a GNN on a single machine, one can minimize the empirical loss $\mathcal{L}(\mathbf{W})$ over the entire graph in the training data, i.e., $\mathcal{L}(\mathbf{W}) = (1/\vert\mathcal{V}\vert)\sum\nolimits_{v\in\mathcal{V}} Loss\big(\mathbf{h}_{v}^{(L)},\mathbf{y}_{v}\big)$,
% \begin{equation}
% \label{eq:single machine loss}
%     \mathcal{L}(\mathbf{W}) = \frac{1}{\vert\mathcal{V} \vert}\sum_{v\in\mathcal{V}} Loss\big(\mathbf{h}_{v}^{(L)},\mathbf{y}_{v}\big),\quad
%     \nabla\mathcal{L}(\mathbf{W}) = \frac{1}{\vert\mathcal{V} \vert}\sum_{v\in\mathcal{V}} \nabla Loss\big(\mathbf{h}_{v}^{(L)},\mathbf{y}_{v}\big)
% \end{equation}
where $Loss(\cdot,\cdot)$ denotes a loss function (e.g., cross entropy loss), and $\mathbf{h}_{v}^{(L)}$ denotes the representation of node $v$ from the last layer of the GNN and can be calculated by following Eq.~\ref{eq:gnn} recursively.
% \textbf{GNN Training on A Single Machine.} As shown in equation \ref{equ:agg}, aggregating neighborhood representations recursively is an essential step for GNN training. The number of neighbors involved in this process expands as the GNN models go deep, which is known as the GNN's \emph{neighborhood explosion} problem. The large graph-structured datasets \citep{hu2020open} used in real-world scenario further deepen the demand for computing resources. To mitigate this problem, mini-batch training \citep{chen2018stochastic, chen2018fastgcn, huang2018adaptive} is proposed as a practical approach. In the mini-batch setting, equation \ref{equ:agg} can be further derived as
% \begin{equation}
% \label{equ:nei}
% \begin{split}
%     \mathbf{h}_{v}^{(\ell+1)} = \Psi_{\theta}^{(\ell+1)}\bigg( \mathbf{h}_{v}^{(\ell)},  \Phi_{\theta}^{(\ell+1)}\Big(\big\{\mathbf{h}^{(\ell)}_u :  u \in  \mathcal{B}(v) \big\} \cup \big\{\mathbf{h}^{(\ell)}_u :  u \in \mathcal{N}(v) \setminus \mathcal{B}(v) \big\} \Big) \bigg)
% \end{split}
% \end{equation}
% where $\mathcal{B}(v)$ denotes the mini-batch that node $v$ belongs to. 

\textbf{Distributed Training for GNNs.} Distributed GNN training means to first partition the original graph into multiple subgraphs without overlap, which can also be considered as mini batches. Then different mini-batches are trained in different devices in parallel. Here, Eq.~\ref{eq:gnn} can be further reformulated as
% \vspace{-2mm}
\begin{equation}
\label{eq:mini-batch}
\begin{split}
    \mathbf{h}_{v}^{(\ell+1)} = \Psi^{(\ell+1)}\Big( \mathbf{h}_{v}^{(\ell)},  \Phi^{(\ell+1)}\Big(\underbrace{\big\{\mathbf{h}^{(\ell)}_u :  u \in  \mathcal{N}(v) \cap \mathcal{S}(v) \big\}}_{\text{In-subgraph nodes}} \cup \underbrace{\big\{\mathbf{h}^{(\ell)}_u :  u \in \mathcal{N}(v) \setminus \mathcal{S}(v) \big\}}_{\text{Out-of-subgraph nodes}} \Big) \Big),
\end{split}
\end{equation}
where $\mathcal{S}(v)$ denotes the subgraph that node $v$ belongs to. In this paper, we consider the distributed training of GNNs with multiple local machines and a global server. The original input graph $\mathcal{G}$ is first partitioned into $M$ subgraphs, where each $\mathcal{G}_{m}(\mathcal{V}_{m},\mathcal{E}_{m})$ represents the subgraph $m$. Our goal is to find the optimal set of parameters $\mathbf{W}$ in a distributed manner by minimizing each local loss, i.e.,
\begin{equation}
%\vspace{-2mm}
\label{eq:distributed loss}
    \text{min}_{\mathbf{W}}\;\mathcal{L}_{m}^{\text{Local}}\big(\mathbf{W}_{m}\big) = \frac{1}{\vert\mathcal{V}_m \vert}\sum_{v\in\mathcal{V}_m} Loss\big(\mathbf{h}_{v}^{(L)},\mathbf{y}_{v}\big), \quad m=1,2,\cdots,M\;\; \text{\emph{in parallel}},
\end{equation}
where $\mathbf{W}_{m}=\{\mathbf{W}_{m}^{(\ell)}\}_{\ell=1}^L$ are local parameters and $\mathbf{h}_{v}^{(L)}$ follows Eq.~\ref{eq:mini-batch} recursively. 

\textbf{Challenges.} The main challenges for distributed training of GNNs lie in the trade-off between \emph{communication cost} and \emph{information loss}. "Partition-based" method generalizes the existing data parallelism techniques of classical distributed training on \emph{i.i.d} data to graph data and enjoys minimal communication cost. However, directly partitioning a large graph into multiple subgraphs can result in severe information loss due to the ignorance of huge number of cross-subgraph edges and cause performance degeneration~\cite{angerd2020distributed,jia2020improving,ramezani2021learn}. For these methods, the representation of neighbors out of the current subgraph (second representation set in Eq.~\ref{eq:mini-batch}) are dropped and the connections between subgraphs are thus ignored. Hence, another line of work~\cite{wang2019deep}, namely "propagation-based" method considers using communication of neighbor nodes for each subgraph to satisfy GNN's neighbor aggregation, which minimizes the information loss. As shown in Eq.~\ref{eq:mini-batch}, the representations for neighbor nodes outside the current subgraph is swapped between different subgraphs. However, the number of neighbors involved in the neighbor aggregation process expands exponentially as the GNN model goes deep, which is known as the \emph{neighborhood explosion} problem. Hence, though no edges are dropped in this case, inevitable communication overhead is incurred and plagues the achievable training efficiency~\cite{ma2019neugraph,zhu2019aligraph,zheng2020distdgl,tripathy2020reducing,wan2022pipegcn}. Moreover, theoretical guarantees (e.g., on convergence, approximation error) are not well explored for distributed GNN due to the joint sophistication of graph structure and neural network optimization.

\vspace{-6pt}
\section{Proposed Method}
\vspace{-4pt}

In this section, we introduce the proposed GNN training framework DIGEST. DIGEST leverages \emph{both} types of representations in Eq.~\ref{eq:mini-batch} to address the information loss issue. In addition, instead of exchanging real-time representations during the training process between the subgraphs, DIGEST only pull and push the stale representations before or after each step of training periodically. With this strategy, the communications turn to be more efficient, which are illustrated in Figure~\ref{fig:digest} and analyzed in more details in Section~\ref{sec:complexity analysis}. Moreover, we prove that the error introduced by the staleness of the stale representation is upper-bounded while the convergence is also guaranteed.

\begin{figure*}[t!]
  \begin{center}
    \includegraphics[width=1\textwidth]{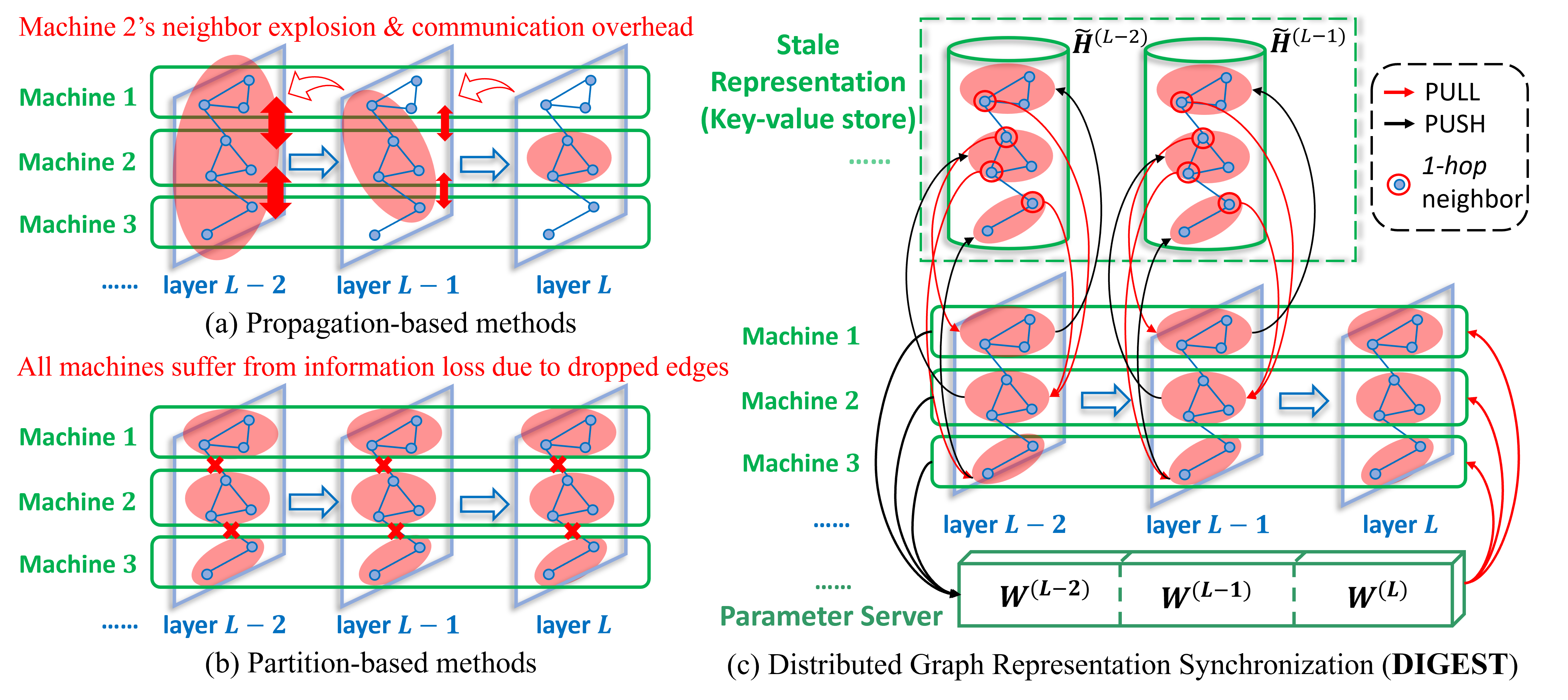}
  \end{center}
  \vspace{-3mm}
  \caption{\textbf{Distributed GNN training methods.} 
  \textbf{(a)}: Propagation-based methods rely on communication of out-of-subgraph neighbor nodes for exact message passing even in a distributed setup. 
  \textbf{(b):} Partition-based methods decompose the original problem into multiple smaller ones and directly apply data parallelism onto partitioned subgraph data. \textbf{(c):} In DIGEST each device utilizes the stale representations of all its neighbors from other subgraphs. 
  Propagation-based methods suffers high communication cost (red vertical double arrows in (a)) due to neighbor explosion, while partition-based methods suffer severe information loss due to dropped edges (red crosses in (b)). DIGEST combines the best of both worlds.
  %synergizes the complementary strengths of both methods. 
  ALL nodes are utilized in DIGEST
  %'s training 
  to achieve full-graph awareness, while periodic stale representation synchronization keeps the communication cost low.}
  \vspace{-4mm}
  \label{fig:digest}
\end{figure*}

% \vspace{-6pt}

% \begin{figure*}[t!]
%   \centering
  
%   \subfigure[Graph partition]{\includegraphics[scale=0.25 ,trim=200 0 150 30, clip]{}}\label{fig:graph partition}
%   \subfigure[Proposed algorithm ]{\includegraphics[scale=0.3,trim=120 0 200 30, clip]{}}\label{fig:DIGEST}

%   \caption{\textbf{Left:} The left figure shows an example that partition the original graph into two sub-graphs. The gray nodes in the sub-graphs denote the neighbor nodes not in current sub-graph. \textbf{Right:} The right figure represents the training process of proposed algorithm, the weights aggregation is made by all-reduce operation among sub-graphs, and there is a global storage for the stale representations. For partition 1, the representations of nodes $0$ to $3$ are pushed to the global storage, and meanwhile pull representations of node $4$ and node $5$.
%   }\label{fig:system}
%     \vspace{-5pt}

% \end{figure*}

\vspace{-6pt}
\subsection{Distributed GNN training with full-graph awareness}
\vspace{-4pt}

In DIGEST, each copy of GNN trained on a local machine will make use of all available graph information, i.e. \emph{no} edges are dropped in both forward and backward propagation. Analytically, calculating each local gradient $\nabla\mathcal{L}_{m}^{\text{Local}}$ as defined in Eq.~\ref{eq:distributed loss} will involve out-of-subgraph neighbor information. For out-of-subgraph neighbor nodes, we approximate their representations via stale representations acquired in previous training, denoted by $\mathbf{\Tilde{h}}^{(\ell)}_{v}$. Formally, given a node $v\in\mathcal{G}_{m}(\mathcal{V}_{m},\mathcal{E}_{m})$, the forward propagation for the $(\ell+1)$-th layer of DIGEST is achieved by modifying Eq.~\ref{eq:mini-batch} as
\begin{equation}
\label{eq:stale representation}
\begin{split}
    \mathbf{h}_{v}^{(\ell+1)} = \Psi^{(\ell+1)}\bigg( \mathbf{h}_{v}^{(\ell)},  \Phi^{(\ell+1)}\Big(\left\{\mathbf{h}^{(\ell)}_u :  u \in \mathcal{N}(v) \cap \mathcal{V}_m \right\}\cup \underbrace{\left\{\mathbf{\Tilde{h}}^{(\ell)}_u :  u \in \mathcal{N}(v) \setminus \mathcal{V}_m \right\}}_{\text{Stale representation}} \Big) \bigg).
\end{split}
\end{equation}
% \begin{equation}
% \label{eq:stale representation}
% \begin{split}
%     \mathbf{h}_{v}^{(\ell+1)} &= F(\mathbf{H}_{in}^{(\ell)},\mathbf{\tilde{H}}_{out}^{(\ell)}) \\
%     &\coloneqq \Psi^{(\ell+1)}\bigg( \mathbf{h}_{v}^{(\ell)},  \Phi^{(\ell+1)}\Big(\left\{\mathbf{h}^{(\ell)}_u :  u \in \mathcal{N}(v) \cap \mathcal{V}_m \right\}\cup \underbrace{\left\{\mathbf{\Tilde{h}}^{(\ell)}_u :  u \in \mathcal{N}(v) \setminus \mathcal{V}_m \right\}}_{\text{stale representation}} \Big) \bigg),
% \end{split}
% \end{equation}
As can be seen, DIGEST considers ALL neighbor nodes information during forward propagation. On the other hand, leveraging the entire graph data in forward propagation will in turn improve the estimation of gradient in backpropagation. To see this, we reformulate Eq.~\ref{eq:stale representation} into the matrix form:
\begin{equation}
\begin{split}
    \mathbf{H}_{in}^{(\ell+1,m)} = F\Big(\mathbf{H}_{in}^{(\ell,m)},\mathbf{\tilde{H}}_{out}^{(\ell,m)}\Big) \coloneqq \sigma\Big(\mathbf{P}_{in}^{(m)}\mathbf{H}_{in}^{(\ell,m)}\mathbf{W}_{m}^{(\ell+1)}+\mathbf{P}_{out}^{(m)}\mathbf{\tilde{H}}_{out}^{(\ell,m)}\mathbf{W}_{m}^{(\ell+1)}\Big),
\end{split}
\label{eq:forward prop}
\end{equation}
where $\mathbf{H}_{in}^{(\ell,m)}$ and $\mathbf{\tilde{H}}_{out}^{(\ell,m)}$ denotes the matrix of in-subgraph node representations and out-of-subgraph stale representations at $\ell$-th layer on subgraph $\mathcal{G}_m$, respectively. $F$ denotes the forward propagation function of one layer of GNN for compact formula. We consider the GCN model as an example for illustration but our analyses apply to general cases of any GNN models. $\mathbf{P}_{in}^{(m)}$ and $\mathbf{P}_{out}^{(m)}$ denotes the propagation matrix for in-subgraph nodes and out-of-subgraph nodes of $\mathcal{G}_m$, respectively, and we have $\mathbf{P}_m=\mathbf{P}_{in}^{(m)}+\mathbf{P}_{out}^{(m)}$ where $\mathbf{P}_m$ is the original propagation matrix for subgraph $\mathcal{G}_m$. $\sigma(\cdot)$ is the activation function following GCN's definition. Hence, the gradient over model parameters is
\begin{equation}
\begin{split}
    &\frac{\partial}{\partial\mathbf{W}^{(\ell+1)}_{m}}F\Big(\mathbf{H}_{in}^{(\ell,m)},\mathbf{\tilde{H}}_{out}^{(\ell,m)}\Big) = \frac{\partial}{\partial\mathbf{W}^{(\ell+1)}_{m}}\sigma\Big(\mathbf{P}_{in}^{(m)}\mathbf{H}_{in}^{(\ell,m)}\mathbf{W}_{m}^{(\ell+1)}+\mathbf{P}_{out}^{(m)}\mathbf{\tilde{H}}_{out}^{(\ell,m)}\mathbf{W}_{m}^{(\ell+1)}\Big) \\
    &= \Big[\mathbf{P}_{in}^{(m)}\mathbf{H}_{in}^{(\ell,m)}+\mathbf{P}_{out}^{(m)}\mathbf{\tilde{H}}_{out}^{(\ell,m)}\Big]^{\top} \sigma^{\prime}\Big(\mathbf{P}_{in}^{(m)}\mathbf{H}_{in}^{(\ell,m)}\mathbf{W}_{m}^{(\ell+1)}+\mathbf{P}_{out}^{(m)}\mathbf{\tilde{H}}_{out}^{(\ell,m)}\mathbf{W}_{m}^{(\ell+1)}\Big).
\end{split}
\label{eq:back prop}
\end{equation}
% \vspace{-2mm}

% \begin{equation}
%     \mathbf{G}^{(\ell+1)}_m =  \frac{\partial}{\partial\mathbf{H}_{in}^{(\ell,m)}}F\Big(\mathbf{H}_{in}^{(\ell,m)},\mathbf{\tilde{H}}_{out}^{(\ell,m)}\Big)\circ\sigma^{\prime}\Big(\mathbf{P}_{in}^{(m)}\mathbf{H}_{in}^{(\ell,m)}\mathbf{W}_{m}^{(\ell+1)}+\mathbf{P}_{out}^{(m)}\mathbf{\tilde{H}}_{out}^{(\ell,m)}\mathbf{W}_{m}^{(\ell+1)}\Big).
% \label{eq:gradient G}
% \end{equation}

The key observation here is that ALL neighbor nodes are involved in the backpropagation since the gradient above depends on $\mathbf{\tilde{H}}_{out}^{(\ell,m)}$. The separation of in-subgraph nodes and out-of-subgraph nodes, and their approximation via stale representation form the very foundation of DIGEST.

% \subsection{Periodic stale representation synchronization with communication efficiency}\label{sec:3.2}
\subsection{System design}\label{sec:3.2}

This section presents the overall system design of DIGEST as depicted in Figure~\ref{fig:digest}. DIGEST maintains a shared-memory-based KVS for storing and retrieving representations. KVS can be easily extended to a truly distributed storage to support large-scale distributed training spanning multiple servers.

We first introduce two operations used by DIGEST to store and retrieve representations. The stale representations of layer $\ell$ for all nodes in $\mathcal{V}$ can be formulated as $\mathbf{\Tilde{H}}^{(\ell)} = \{ \mathbf{\tilde{h}}^{(\ell)}_v : v \in \mathcal{V}\}$. For any subgraph $\mathcal{G}_m$ to start
%, before the start of 
the forward process of layer $\ell$, the necessary stale representations $\mathbf{\tilde{H}}_{out}^{(\ell,m)} = \{\mathbf{\Tilde{h}}^{(\ell)}_u :  u \in \mathcal{N}(v) \setminus \mathcal{V}_m, \forall\; v \in \mathcal{V}_m\}$ are pulled from the KVS that stores representations; this is called a "pull" operation denoted as $\mathbf{H}^{(\ell,m)}_{out} \leftarrow \mathbf{\tilde{H}}^{(\ell,m)}_{out}$. See Figure~\ref{fig:digest}(c). After the end of a epoch, the newly-computed representations $\mathbf{H}_{in}^{(\ell,m)} = \{\mathbf{{h}}^{(\ell)}_v : \forall\; v \in \mathcal{V}_m\}$ are pushed to the KVS, and these newly-stored representations will be fetched as stale representations in future epochs; this is called a "push" operation denoted as $\mathbf{H}^{(\ell,m)}_{in} \rightarrow \mathbf{\tilde{H}}^{(\ell,m)}_{in} $. 

DIGEST features two training modes: (1)~DIGEST: a synchronous mode designed ideal for homogeneous training environments. (2)~DIGEST-A: an asynchronous mode that better fits for a heterogeneous training environment. DIGEST and DIGEST-A follow different parameter and representation updating strategies. In DIGEST, for each global round, before fetching the aggregated parameters and pulling the stale representations, each subgraph has to wait for other subgraphs to finish updating the latest parameters to the parameter server (PS) and their local representations to the KVS. However, some subgraphs may have lower computing resource compared to other subgraphs, which we call stragglers. This may lead to imbalanced local training times. In this case, with the synchronous mode, the overall training process can be bottlenecked by the slowest subgraph, therefore suffering from prolonged training time. To address this issue, DIGEST-A applies an asynchronous, non-blocking strategy, where each subgraph directly pulls/pushes stale representations of other subgraphs from the shared KVS and downloads/uploads parameters from the PS without blindly waiting for the slowest subgraph to finish. 
For better scalability, we will explore disaggregated storage techniques~\cite{flash_eurosys16, decibel_nsdi17, farmemory_eurosys20} as part of our future work, where DIGEST can utilize a network-attached, high-performance far memory storage system for representation storage and retrieval. We summarize our algorithm in Algorithm~\ref{alg:DIGEST}.

\begin{algorithm}[ht!]
\caption{Distributed GNN training with periodic stale representation synchronization}
\label{alg:DIGEST}
% \small
\SetKwInput{KwInput}{Input}              % Set the Input
\SetKwInput{KwOutput}{Output}              % set the Output
\DontPrintSemicolon
  
  \KwInput{Graph $\mathcal{G(V, E)}$; GNN depth $L$; training epoch $R$; global parameters $\mathbf{W}^{(r)}=\{\mathbf{W}^{(r,\ell)}\}_{\ell=1}^L$, local parameters $\mathbf{W}_{m}^{(r)}=\{\mathbf{W}_{m}^{(r,\ell)}\}_{\ell=1}^L$, $\forall\;m\in\big[M\big], r \in \big[R\big]$;  non-linearity activation function $\sigma$; neighborhood function $\mathcal{N}: v \rightarrow 2^{\mathcal{V}}$; synchronization interval N; learning rate $\eta$.}
  \KwOutput{The trained model weights $\mathbf{W}^{(R+1)}$.}
  \SetKwFunction{FServer}{DIGEST}
  \SetKwProg{Fn}{}{:}{}
  \Fn{\FServer{}}{
        \ Initialize $\mathbf{W}^{(1)}$\\
        $\{\mathcal{G}_{m}(\mathcal{V}_{m},\mathcal{E}_{m}), m = 1,2,..,M\}$ $\leftarrow$ METIS($\mathcal{G}$) \algorithmiccomment{graph partition} \\
        \For{$r = 1...R$}{
        \For{$m=1,\cdots,M$ in parallel} {  
        $\mathbf{W}^{(r)}_m = \mathbf{W}^{(r)}$\\
         \For{$\ell = 1...L$}{
         \If{$r\;\%\; N == 0$ and $\ell \neq L$}{
            $\mathbf{H}_{out}^{(\ell,m)}\leftarrow \mathbf{\tilde{H}}_{out}^{(\ell,m)}$\algorithmiccomment{PULL}\\}
            \For{$v \in \mathcal{V}_m$}{
            \ $\mathbf{h}_{out}^{(\ell)} = \{\mathbf{h}^{(\ell)}_u :  u \in \mathcal{N}(v) \setminus \mathcal{V}_m\}$\\
            $\mathbf{h}_{in}^{(\ell)} = \{\mathbf{h}^{(\ell)}_u :  u \in \mathcal{N}(v) \cap \mathcal{V}_m\}$\\
            $\mathbf{h}^{(\ell)}_v = \sigma \Big(\mathbf{W}^{(r, \ell)}_m\cdot$ CONCAT$\big(\mathbf{h}^{(\ell)}_v, \mathbf{h}_{in}^{(\ell)},  \mathbf{h}_{out}^{(\ell)}\big)\Big)$\\
        % PUSH($h^l_v$) \algorithmiccomment{push stale representations} \\
        % }
        % \Else{$h^l_v \leftarrow \sigma (W^l\cdot$ CONCAT$(h^l_v, h^{l}_{\mathcal{N}(v) \cap {m}}))$\;}

        }
        \If{$(r - 1)\;\%\;N == 0$ and $\ell \neq L$}{
             $\mathbf{H}^{(\ell,m)}_{in} \rightarrow \mathbf{\tilde{H}}^{(\ell,m)}_{in} $ \algorithmiccomment{PUSH}
        
        % $\mathbf{h}^{(\ell)}_v \rightarrow \mathbf{\tilde{h}}_{v}^{(\ell)}$ \algorithmiccomment{PUSH}
            
            }
            $\mathbf{h}^{(\ell)}_v \leftarrow \mathbf{h}^{(\ell)}_v/\|\mathbf{h}^{(\ell)}_v\|_2$,\;$\forall\;v \in \mathcal{V}_m$ \algorithmiccomment{representation normalization} \;
        
        $\mathbf{W}^{(r, \ell + 1)}_m = \mathbf{W}^{(r, \ell)}_m - \eta \cdot \bigtriangledown \mathbf{W}^{(r, \ell)}_m$ \algorithmiccomment{update local parameters} \;

         }
        }
        $\mathbf{W}^{(r+1)} \leftarrow$ \textbf{AGG}($\mathbf{W}^{(r + 1)}_1...\mathbf{W}^{(r + 1)}_M$) \algorithmiccomment{update global parameters} \;
        % $H_v \leftarrow$ AGGREGATE($h_{v_1}...h_{v_M}$)
        }
    
    \KwRet $\mathbf{W}^{(R+1)}$
  }

\end{algorithm}

Algorithm \ref{alg:DIGEST} shows the process of DIGEST's synchronous mode. At the beginning of training, the original graph is partitioned into several subgraphs with off-the-shelf graph clustering methods; DIGEST uses the widely-used METIS algorithm~\cite{karypis1998fast}. Then the mini-batches are distributed to distinct workers, each of which  handles training on a GPU device. Depending on the size of the mini-batch, a single worker can handle one or multiple subgraphs. DIGEST has two types of I/Os: storing and retrieving the model weights and stale representation as illustrated in Figure~\ref{fig:digest}(c). The former one is performed for each epoch by aggregating all the model weights of other subgraphs in parallel (Line 13).
% ALLReduce opertion which can aggregates all the model weights of subgraphs into global weights in parallel (Line 13).  
For the stale representation synchronization, we synchronize every $N$ epochs, where we empirically tune $N$ to obtain the optimal performance over training time with the defined pull and push operations (Lines 5,6,9,10). To support the asynchronous mode (DIGEST-A), we can simply remove the loop of training epoch and move the parameter aggregation (Line 13) into the subgraph loop.

In addition, DIGEST and DIGEST-A use several optimizations to minimize the I/O overhead introduced by pulls and pushes.
First, we observe that there are a large number of node representations involved in both pull and push operations, and more importantly, nodes are independent of each other on these two operations. Hence, it is inherently suitable for parallel I/O at the granularity of node level.
%in terms of node level. 
For subgraph $\mathcal{G}_m$, the total number of stale representations needed to be pulled from the KVS is $|\mathbf{\tilde{H}}_{out}^{(\ell,m)}|$. Assume that it takes time $t$ to pull the stale representation of one node, the total time cost should be $|\mathbf{\tilde{H}}_{out}^{(\ell,m)}|\times t$ if being pulled in serial. But with parallel I/O where needed representations are pulled in parallel, theoretically we can still keep the pull time for $v \in \mathcal{V}_m$ as $t$. 
Additionally, we observe that the pull operation for $\mathbf{\tilde{H}}_{out}^{(\ell,m)}$ can be overlapped with the forward process of layer $\ell - 1$; 
similarly, the push operation for $\mathbf{H}_{in}^{(\ell,m)}$ can be overlapped with the forward process of layer $\ell + 1$. 
The training process on each subgraph is depicted in Figure~\ref{fig:parallel}. The cost of pull/push operations is hidden by the layer forward process, therefore, is eliminated.
%and most of I/O overhead is eliminated.

\begin{figure}[ht!]
%   \vspace{-20pt}
  \begin{center}
    \includegraphics[width=0.6\textwidth]{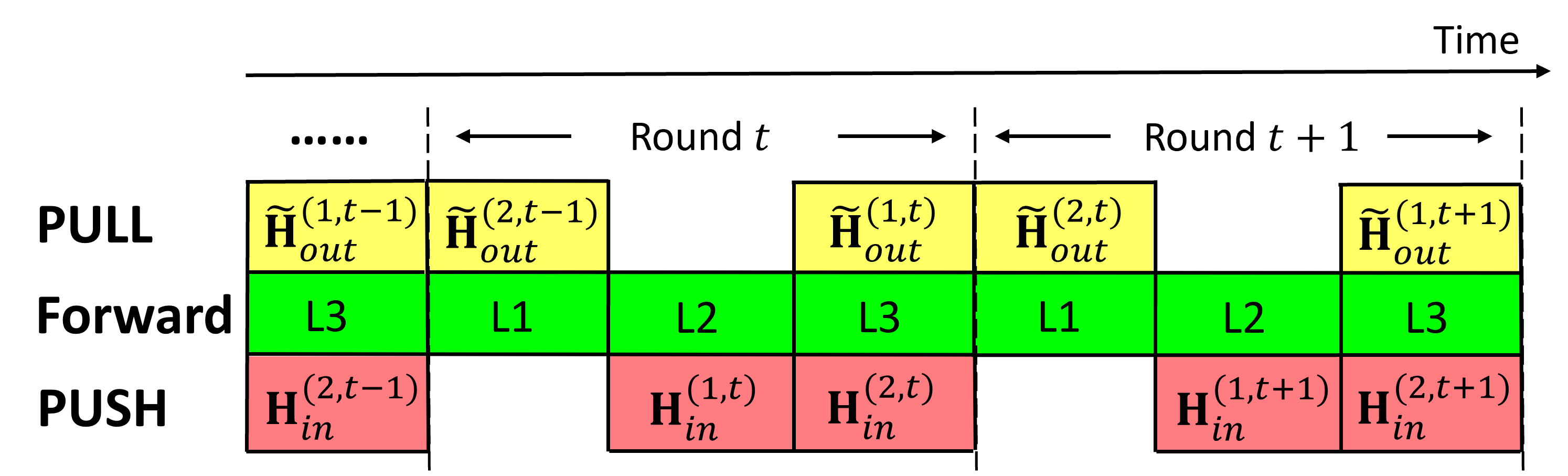}
  \end{center}
  \caption{Illustration of DIGEST's concurrent pull/push and forward propagation operations on a 3-layer GNN.}
%   \vspace{-10pt}
  \label{fig:parallel}
%   \vspace{-7mm}
\end{figure}

% \begin{figure*}[ht!]
%   \begin{center}
%     \includegraphics[width=0.7\textwidth]{}
%   \end{center}
%   \vspace{-2mm}
%   \caption{Illustration of DIGEST's concurrent pull/push operations.}
%   \label{fig:parallel}
%   \vspace{-7mm}
% \end{figure*}
Second, to further reduce the I/O overhead, DIGEST uses a periodic representation synchronization strategy, which pushes updated representations to the KVS once every $N$ epoches. This introduces a trade-off in I/O overhead and training performance.
%another strategy used for improving the communication is applying periodical representation synchronization instead of synchronizing every epoch. 
Increasing the frequency of the periodic synchronization will benefit performance, but this will introduce more I/O overhead.
%there will be additional time consumption. 
We analyze this trade-off
%the relation between performance over time and synchronization frequency of representations 
in Section~\ref{sec:interval}.

\subsection{Complexity analyses}\label{sec:complexity analysis}

Here we analyze the memory and communication complexity of DIGEST, compared with propagation-based methods. Without loss of generality, consider a GNN with $L$ layers and each layer has a fixed width $d$. For propagation-based methods, the memory complexity for each local machine grows exponentially with respect to $L$. In comparison, DIGEST pulls the required out-of-subgraph node representations and keeps them locally for each local machine. For the $m$-th local machine and the corresponding subgraph on it,  i.e. $\mathcal{G}_{m}(\mathcal{V}_{m},\mathcal{E}_{m})$, the memory complexity per training iteration is $\mathcal{O}\big(\left\vert \bigcup_{v\in\mathcal{V}_{m}}\mathcal{N}(v)\cup\{v\}\right\vert L d\big)$. which scales \emph{linearly} with respect to the number of GNN layers. Propagation-based methods also suffer from poor scalability in terms of communication since the number of nodes required for each subgraph's computation again grows exponentially with respect to $L$. In contrast, DIGEST's communication cost per round can be expressed as $\mathcal{O}\big(M L d^2 + \sum\nolimits_{m=1}^{M}\left\vert \bigcup_{v\in\mathcal{V}_{m}}\mathcal{N}(v)\setminus \mathcal{V}_m \right\vert L d + N L d\big)$, where the first term represents the cost for pull and push operations of GNN model parameters or gradients, the second term and the third term represents the cost for pull and push operations of stale representations, respectively, and $N$ is the original graph size. The third term is because the partition of subgraphs is non-overlapping. Again, the communication cost of DIGEST is only \emph{linear} with respect to GNN depth $L$.

% Our analysis here provides insights about how to further reduce the communication cost. First, notice the second term $\mathcal{O}\big(\sum\nolimits_{m=1}^{M}\left\vert \bigcup_{v\in\mathcal{V}_{m}}\mathcal{N}(v)\setminus \mathcal{V}_m \right\vert\big)$, which actually depends on how the global graph is partitioned, and fewer cross-partition links will reduce this term and result in smaller communication cost. Second, compression techniques such as gradient sparsification~\citep{strom2015scalable} and quantization~\citep{alistarh2017qsgd} can be applied to DIGEST without issue and reduce the cost for the first term, i.e. $\mathcal{O}\big(M\cdot L\cdot d^2\big)$.

\section{Theoretical Analyses}

In this section, we provide theoretical analyses of the propose distributed strategy DIGEST, including the bound of error induced by the staleness of node representations, and convergence guarantee for DIGEST under both synchronous and asynchronous settings. 
% Our analysis is generic for arbitrary GNN $f$ with Lipschitz-smooth aggregating function $\Phi$ and updating function $\Psi$
All proofs can be found in the appendix.

\subsection{Error bound on global approximated gradients}
% \begin{theorem}
% \label{thm:final_output_error}
% Given a $L$-layer GNN $f_{\mathbf{W}}$ with $r_1$-Lipschitz smooth $\Phi$ and $r_2$-Lipschitz smooth $\Psi$. If $\forall\; v\in\mathcal{V}$ and $\forall\; \ell\in\{1,2,\cdots,L-1\}$ we have $\Vert \mathbf{h}_{v}^{(\ell)} - \mathbf{\tilde{h}}_{v}^{(\ell)} \Vert \leq\epsilon^{(\ell)}$, then the representation from the last layer (i.e., final output) of the GNN is bounded by $\Vert \mathbf{h}_{v}^{(L)} - \mathbf{h^*}_{v}^{(L)} \Vert \leq \sum\nolimits_{\ell=1}^{L-1}\epsilon^{(\ell)}r_{1}^{L-\ell}r_{2}^{L-\ell}\vert \mathcal{N}(v)\vert^{L-\ell}$, where $\mathbf{h^*}_{v}^{(L)}$ denotes the exact output from the last layer of GNN without any staleness.
% \end{theorem} 
% Following Theorem~\ref{thm:final_output_error}, we can derive the upper bound for gradients as well:
Our first theorem shows that under the distributed setting, the approximation error of the global model's gradients can be upper bounded by the staleness of node representations.
\begin{theorem}
\label{thm:gradient bound}
Given a $L$-layer GNN $f_{\mathbf{W}}$ with $r_1$-Lipschitz smooth $\Phi$ and $r_2$-Lipschitz smooth $\Psi$. Denote $\Delta(\mathcal{G})$ as the maximal node degree for graph $\mathcal{G}$. Assume $\forall\; v\in\mathcal{V}$ and $\forall\; \ell\in\{1,2,\cdots,L-1\}$ we have $\Vert \mathbf{h}_{v}^{(\ell)} - \mathbf{\tilde{h}}_{v}^{(\ell)} \Vert \leq\epsilon^{(\ell)}$, where $\mathbf{h}_{v}^{(\ell)}$ and $\mathbf{\tilde{h}}_{v}^{(\ell)}$ denotes the node representation computed by DIGEST and the stale one, respectively. Further assume each local loss function $\mathcal{L}_{m}^{\text{Local}}$ is $\tau$-Lipschitz smooth w.r.t the node representation. Then, we have that 
\begin{equation}
    \big\Vert \nabla_{\mathbf{W}}\mathcal{L} - \nabla_{\mathbf{W}}\mathcal{L}^{*} \big\Vert_2 \leq \frac{\tau}{M} \sum_{\ell=1}^{L-1}\epsilon^{(\ell)}r_{1}^{L-\ell}r_{2}^{L-\ell}\sum_{m=1}^M\vert \Delta(\mathcal{G}_m)\vert^{L-\ell} ,
\end{equation}
where $\nabla_{\mathbf{W}}\mathcal{L}$ and $\nabla_{\mathbf{W}}\mathcal{L}^{*}$ denotes the global gradient computed by DIGEST and the exact global gradient without any staleness.

% $\nabla_{\mathbf{W}}\mathcal{L}=M^{-1}\sum_{m=1}^M \nabla_{\mathbf{W}}\mathcal{L}_{m}^{\text{Local}}(\mathbf{H}_{m}^{(L)})$ and $\nabla_{\mathbf{W}}\mathcal{L}^{*}=M^{-1}\sum_{m=1}^M \nabla_{\mathbf{W}}\mathcal{L}_{m}^{\text{Local}}(\mathbf{H^*}_{m}^{(L)})$. $\mathbf{H}_{m}^{(L)}=\{\mathbf{h}_{v}^{(L)}:v\in\mathcal{V}_m\}$ and $\mathbf{H^*}_{m}^{(\ell)}$ denotes the exact output from the $\ell$-th layer of GNN without any staleness.
\end{theorem}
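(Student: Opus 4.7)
The plan is to decompose the gradient discrepancy by subgraph, convert a gradient-level bound into a representation-level bound via the loss's Lipschitz smoothness, and then unroll a layer-wise recursion driven by the Lipschitz smoothness of $\Phi$ and $\Psi$. Since the global loss is the average of the local losses, the triangle inequality reduces the target quantity to $\frac{1}{M}\sum_{m=1}^{M}\bigl\Vert \nabla_{\mathbf{W}}\mathcal{L}_{m}^{\text{Local}} - \nabla_{\mathbf{W}}\mathcal{L}_{m}^{\text{Local},*}\bigr\Vert$, where the starred quantity denotes the exact (staleness-free) version. Each local loss depends on $\mathbf{W}$ only through the output representations $\{\mathbf{h}_{v}^{(L)}\}_{v\in\mathcal{V}_m}$, so the $\tau$-Lipschitz smoothness of $\mathcal{L}_{m}^{\text{Local}}$ with respect to those representations converts each per-subgraph gradient gap into $\tau$ times the layer-$L$ representation discrepancy on $\mathcal{V}_m$.

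Next I would build a layer-wise recursion. Writing $\delta_v^{(\ell)} := \Vert \mathbf{h}_v^{(\ell)} - \mathbf{h}_v^{(\ell),*} \Vert$, the layer-$(\ell+1)$ update for $v \in \mathcal{V}_m$ applies $\Psi$ to $\mathbf{h}_v^{(\ell)}$ and to $\Phi$ evaluated on the neighbor representations. I would split $\mathcal{N}(v)$ into $\mathcal{N}(v) \cap \mathcal{V}_m$, whose DIGEST value deviates from the exact one only by the already-propagated error $\delta_u^{(\ell)}$, and $\mathcal{N}(v)\setminus\mathcal{V}_m$, whose DIGEST value is the stale $\tilde{\mathbf{h}}_u^{(\ell)}$, so that the triangle inequality gives a gap of at most $\delta_u^{(\ell)} + \epsilon^{(\ell)}$. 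Applying the $r_2$-Lipschitz bound on $\Psi$ and the $r_1$-Lipschitz bound on $\Phi$ (acting neighbor-wise) then yields a recursion of the form $\delta_v^{(\ell+1)} \le r_2 \delta_v^{(\ell)} + r_1 r_2 \sum_{u \in \mathcal{N}(v)} \delta_u^{(\ell)} + r_1 r_2\, |\mathcal{N}(v)\setminus\mathcal{V}_m|\, \epsilon^{(\ell)}$. Taking $\delta^{(\ell)} := \max_{v \in \mathcal{V}_m} \delta_v^{(\ell)}$ and bounding each degree by $\Delta(\mathcal{G}_m)$ collapses this to $\delta^{(\ell+1)} \le r_1 r_2\, \Delta(\mathcal{G}_m)\bigl(\delta^{(\ell)} + \epsilon^{(\ell)}\bigr)$, with the self-term absorbed into the degree-scaled neighbor term.

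Unrolling this linear recursion from layer $1$ (with $\delta^{(0)} = 0$, since the base features are never stale) up to layer $L$ gives $\delta^{(L)} \le \sum_{\ell=1}^{L-1} \epsilon^{(\ell)} \bigl(r_1 r_2\, \Delta(\mathcal{G}_m)\bigr)^{L-\ell}$. Scaling by $\tau$, averaging over the $M$ subgraphs, and swapping the order of summation over $\ell$ and $m$ produces exactly the claimed bound. The delicate step is the recursion itself: one must cleanly separate the propagated error from the fresh staleness injection at each layer so that only $\epsilon^{(\ell)}$ enters as a ``source'' term, and then argue that the self- and neighbor-contributions can be merged so the amplification reduces to the single geometric factor $(r_1 r_2 \Delta(\mathcal{G}_m))^{L-\ell}$. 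The remaining work is routine triangle-inequality and Lipschitz bookkeeping.
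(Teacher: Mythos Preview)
Your proposal is correct and follows essentially the same route as the paper: reduce the global gradient gap to per-subgraph gaps by the triangle inequality, convert each via the $\tau$-Lipschitz smoothness of the local loss into a final-layer representation gap, and control the latter by a layer-wise Lipschitz recursion in $\Phi,\Psi$ that yields the geometric factor $(r_1 r_2\,\Delta(\mathcal{G}_m))^{L-\ell}$. The only cosmetic difference is that the paper does not unroll the recursion itself but imports the per-node bound $\Vert \mathbf{h}_v^{(L)}-\mathbf{h}_v^{(L),*}\Vert \le \sum_{\ell=1}^{L-1}\epsilon^{(\ell)}r_1^{L-\ell}r_2^{L-\ell}|\mathcal{N}(v)|^{L-\ell}$ directly from Theorem~2 of GNNAutoScale and then bounds $|\mathcal{N}(v)|\le \Delta(\mathcal{G}_m)$ before applying the loss smoothness and averaging.
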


% \vspace{-6pt}
% \subsection{Equivalency to WL test in expressiveness}
% \vspace{-6pt}

% \begin{lemma}
% \label{lemma:ball condition}
% Given countable representation set $\{\mathbf{h^*}_{v}^{(\ell-1)}:v\in\mathcal{V}\}$, s.t. $\Vert \mathbf{h^*}_{v}^{(\ell-1)} - \mathbf{h^*}_{w}^{(\ell-1)} \Vert > 2(\mu+\epsilon)$, $\forall\;v \neq w\in\mathcal{V}$. If $\Vert \mathbf{h}_{v}^{(\ell-1)} - \mathbf{h^*}_{v}^{(\ell-1)} \Vert \leq \mu$ and $\Vert \mathbf{h}_{v}^{(\ell-1)} - \mathbf{\tilde{h}}_{v}^{(\ell-1)} \Vert\leq \epsilon$, then $\exists\; \Phi$ and $\Psi$, s.t.
% \begin{equation}
%     \Vert f^{(\ell)}(\mathbf{h}_{v}^{(\ell-1)}) - f^{(\ell)}(\mathbf{h^*}_{v}^{(\ell-1)}) \Vert \leq \mu + \epsilon \; and \; \Vert f^{(\ell)}(\mathbf{h^*}_{v}^{(\ell-1)}) - f^{(\ell)}(\mathbf{h^*}_{w}^{(\ell-1)}) \Vert \geq 2(\mu+\epsilon+\lambda)
% \end{equation}
% $\forall\; v\neq w\in\mathcal{V}$ and $\lambda > 0$. $f$ is the GNN composed from $\Phi$ and $\Psi$. Parameters are omitted here.
% \end{lemma}

% \begin{theorem}
% \label{thm:wl test}
% Given a L-layer GNN $f$ which can be decomposed into $\Phi$ and $\Psi$ that satisfy the conditions in Lemma~\ref{lemma:ball condition}, then $\exists$ a function $\xi:\mathbb{R}^{d_L}\rightarrow\Sigma$ s.t. $\xi(\mathbf{h}_{v}^{(L)})=c_{v}^{(L)}$, $\forall\;v\in\mathcal{V}$
% \label{thm:equivalency to wl test}
% where $c_{v}^{(L)}$ denotes a node's coloring after $L$ rounds of refinement and $d_L$ is the output layer width.
% \end{theorem}

\subsection{Convergence of synchronous DIGEST}

As both fresh inner-subgraph node representations and stale out-of-subgraph representations are adopted in our algorithm, its convergence rate is still unknown. We have proved the convergence of DIGEST and present the convergence property in the following theorem:

\begin{theorem}
$\forall\;\epsilon > 0$, $\exists$ constant $E>0\;$ such that, we can choose a learning rate $\eta=\frac{\sqrt{M\epsilon}}{E}$ and number of training iterations $T=(\mathcal{L}(\mathbf{W}^{(1)})-\mathcal{L}(\mathbf{W}^{*}))\frac{E}{\sqrt{M}}\epsilon^{-\frac{3}{2}}$ \;$s.t.$,
\begin{equation}
% \vspace{-2mm}
    \frac{1}{T}\sum_{t=1}^T \left\Vert \nabla \mathcal{L}(\mathbf{W}^{(t)}) \right\Vert^{2}_{2} \leq \mathcal{O}\left(\frac{1}{T^{\frac{2}{3}}M^{\frac{1}{3}}}\right),
\end{equation}
where $\mathbf{W}^{*}$ denotes the optimal parameter.
\label{thm:sync convergence}
\end{theorem}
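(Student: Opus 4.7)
The plan is to follow the standard non-convex SGD convergence template, adapted to absorb the staleness error using Theorem~\ref{thm:gradient bound}. I would first invoke smoothness of the global loss $\mathcal{L}$ (with smoothness constant $L_s$, obtainable from the Lipschitz assumptions on $\Phi$, $\Psi$, and the loss) to write the per-iteration descent inequality
\begin{equation*}
\mathcal{L}(\mathbf{W}^{(t+1)}) - \mathcal{L}(\mathbf{W}^{(t)}) \;\leq\; -\eta\,\big\langle \nabla \mathcal{L}(\mathbf{W}^{(t)}),\; \bar{g}^{(t)}\big\rangle + \tfrac{L_s \eta^{2}}{2}\,\|\bar{g}^{(t)}\|^{2},
\end{equation*}
where $\bar g^{(t)} = \tfrac{1}{M}\sum_{m=1}^{M}\nabla \mathcal{L}_{m}^{\text{Local}}$ is the aggregated update actually computed by synchronous DIGEST from the stale representations.

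Second, I would split $\bar g^{(t)} = \nabla \mathcal{L}(\mathbf{W}^{(t)}) + e^{(t)}$, where $e^{(t)}$ is the aggregated staleness-induced error whose norm is controlled by Theorem~\ref{thm:gradient bound}. Applying Young's inequality to the cross term $-\eta\langle\nabla\mathcal{L},\bar g^{(t)}\rangle = -\eta\|\nabla\mathcal{L}\|^{2}-\eta\langle\nabla\mathcal{L},e^{(t)}\rangle$ and $\|\bar g^{(t)}\|^{2}\le 2\|\nabla\mathcal{L}\|^{2}+2\|e^{(t)}\|^{2}$, and choosing $\eta$ small enough that $L_s\eta\le 1/4$, I obtain the cleaner bound
\begin{equation*}
\mathcal{L}(\mathbf{W}^{(t+1)}) - \mathcal{L}(\mathbf{W}^{(t)}) \;\leq\; -\tfrac{\eta}{4}\|\nabla\mathcal{L}(\mathbf{W}^{(t)})\|^{2} + c\,\eta\,\|e^{(t)}\|^{2},
\end{equation*}
for an absolute constant $c$.

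Third, I would telescope by summing over $t=1,\dots,T$, using $\mathcal{L}(\mathbf{W}^{(T+1)})\ge \mathcal{L}(\mathbf{W}^{*})$, to obtain
\begin{equation*}
\frac{1}{T}\sum_{t=1}^{T}\|\nabla\mathcal{L}(\mathbf{W}^{(t)})\|^{2} \;\leq\; \frac{4\bigl(\mathcal{L}(\mathbf{W}^{(1)})-\mathcal{L}(\mathbf{W}^{*})\bigr)}{\eta T} + \frac{4c}{T}\sum_{t=1}^{T}\|e^{(t)}\|^{2}.
\end{equation*}
The crux is now to control $\tfrac{1}{T}\sum_{t}\|e^{(t)}\|^{2}$. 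Theorem~\ref{thm:gradient bound} bounds $\|e^{(t)}\|$ by a weighted sum over the layer-wise representation staleness $\epsilon^{(\ell)}_{t}$. Since representations from other subgraphs are refreshed every $N$ epochs, $\epsilon^{(\ell)}_{t}$ can be expressed via the forward-pass Lipschitz constant applied to $\|\mathbf{W}^{(t)}-\mathbf{W}^{(t-k)}\|$ for some $k\le N$, which by the SGD update is at most $\eta\sum_{s=t-k}^{t-1}\|\bar g^{(s)}\|$; using the decomposition once more yields a recursive bound of the form $\|e^{(t)}\|^{2}\le C\eta^{2}\sum_{s}\|\nabla\mathcal{L}(\mathbf{W}^{(s)})\|^{2}$, plus a term scaling as $1/M$ from aggregating independent per-subgraph stochastic terms across the $M$ workers. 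Plugging back and absorbing the resulting $\eta^{2}$-multiple of the gradient sum into the left-hand side gives a master inequality of the form
\begin{equation*}
\frac{1}{T}\sum_{t=1}^{T}\|\nabla\mathcal{L}(\mathbf{W}^{(t)})\|^{2} \;\leq\; \frac{A}{\eta T} + \frac{B\,\eta}{M} + D\,\eta^{2}.
\end{equation*}

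Finally, I would optimize in $\eta$: choosing $\eta \sim \sqrt{M\epsilon}$ balances the first and second terms, and the stated choices of $\eta=\sqrt{M\epsilon}/E$ and $T\sim \epsilon^{-3/2}/\sqrt{M}$ then plug in to yield $\tfrac{1}{T}\sum\|\nabla\mathcal{L}\|^{2}\le \mathcal{O}(\epsilon) = \mathcal{O}(T^{-2/3}M^{-1/3})$, matching the theorem. The main obstacle I expect is the step where the staleness error is rolled into a bound depending on the gradient norms themselves; this self-referential bound must be closed carefully (likely by Gronwall-style arguments or by assuming uniformly bounded gradients) and it is this step that determines the exponent of $M$: a careless bound loses the $M^{-1/3}$ factor and reverts to $\mathcal{O}(T^{-1/2})$, while too-aggressive an assumption trivializes the analysis.
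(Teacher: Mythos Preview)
Your overall template---smoothness descent lemma, split the aggregated update into the true gradient plus a staleness error, telescope, then optimize $\eta$---is exactly what the paper does. Where you diverge from the paper, and where the gap lies, is in how the staleness error term is controlled.

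The paper does \emph{not} route the error bound through Theorem~\ref{thm:gradient bound}. Instead, after reaching
\[
\|\nabla\mathcal{L}(W^{(t)})\|^{2} \;\le\; \tfrac{2}{\eta}\bigl(\mathcal{L}(W^{(t)})-\mathcal{L}(W^{(t+1)})\bigr) + \eta L_f\Bigl\|\tfrac{1}{M}\sum_{m}\delta_m^{(t)}\Bigr\|^{2},
\]
it invokes Corollary~A.10 of \cite{wan2022pipegcn} (together with the three standing assumptions) to bound the averaged error directly, yielding the two-term master inequality
\[
\frac{1}{T}\sum_{t=1}^{T}\|\nabla\mathcal{L}(W^{(t)})\|^{2} \;\le\; \frac{2}{\eta T}\bigl(\mathcal{L}(W^{(1)})-\mathcal{L}(W^{*})\bigr) + \frac{\eta^{2}E^{2}}{M}.
\]
It is precisely this $\eta^{2}/M$ scaling that, under $\eta=\sqrt{M\epsilon}/E$ and $T\propto \epsilon^{-3/2}/\sqrt{M}$, delivers the $\mathcal{O}(T^{-2/3}M^{-1/3})$ rate.

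Your alternative path has two concrete problems. First, your justification for the $1/M$ factor---``aggregating independent per-subgraph stochastic terms across the $M$ workers''---does not apply here: the partition $\{\mathcal{G}_m\}$ is a fixed, deterministic decomposition of one graph, so there is no variance-reduction mechanism when averaging the $\delta_m^{(t)}$. Second, the master inequality you arrive at, $\tfrac{A}{\eta T}+\tfrac{B\eta}{M}+D\eta^{2}$, is not the form the theorem's choices of $\eta$ and $T$ are designed for; plugging $\eta=\sqrt{M\epsilon}/E$ into your three-term bound makes the $D\eta^{2}$ term scale like $M\epsilon$, which blows up in $M$ rather than improving. To match the statement you need the error contribution to be $\mathcal{O}(\eta^{2}/M)$, and that is exactly what the external PipeGCN corollary supplies; your parameter-drift-plus-Theorem~\ref{thm:gradient bound} argument, as written, does not produce it.
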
 
% \vspace{-8pt}
Our convergence rate of DIGEST is $\mathcal{O}(T^{-2/3}M^{-1/3})$, which is better than pipeline-parallelism method $\mathcal{O}(T^{-2/3})$~\cite{wan2022pipegcn} and sampling-based method $\mathcal{O}(T^{-1/2})$~\cite{chen2018stochastic,cong2021importance}, and very close to full-graph training $\mathcal{O}(T^{-1})$.

\subsection{Convergence of asynchronous DIGEST}

Convergence for asynchronous distributed algorithms could be even harder to obtain due to the delay in parameter's update (the global model's parameters may have been updated several times when the slowest local machine finishes its computation.) Our main result is shown below:

\begin{theorem}
Assume the global model $\mathcal{L}(\mathbf{W})$ is Lipschitz continuous and the asynchronous delay is bounded. There exist constant $B$ and a second-order polynomial of learning rate $\eta$, i.e., $P(\eta)$ such that after $T$ global iterations on the server, asynchronous DIGEST converges to the optimal parameter $\mathbf{W}^*$ by 
\begin{equation}
\frac{1}{T}\sum_{t=1}^T \left\Vert \nabla \mathcal{L}(\mathbf{W}^{(t)}) \right\Vert^{2}_{2} \leq \frac{1}{\eta TB}\Big(\mathcal{L}(\mathbf{W}^{(1)})-\mathcal{L}(\mathbf{W}^{(*)})\Big) + \frac{P(\eta)}{B}.
\end{equation}
\label{thm:asyn convergence}
\end{theorem}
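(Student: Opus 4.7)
The plan is to adapt the standard descent-lemma analysis for asynchronous SGD so that it accommodates the two sources of staleness in DIGEST-A: the delay in propagating local parameter updates back to the server, and the staleness of out-of-subgraph node representations that was already quantified in Theorem~\ref{thm:gradient bound}. Concretely, I would view the returned update $\tilde g^{(t)}$ at global iteration $t$ as a biased perturbation of the exact gradient $\nabla\mathcal{L}(\mathbf{W}^{(t)})$ and decompose the bias into a ``representation-staleness part'' and a ``parameter-delay part.''

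First I would invoke the Lipschitz-smooth descent inequality on $\mathcal{L}$, namely
\begin{equation*}
\mathcal{L}(\mathbf{W}^{(t+1)}) \le \mathcal{L}(\mathbf{W}^{(t)}) - \eta\,\langle\nabla\mathcal{L}(\mathbf{W}^{(t)}),\tilde g^{(t)}\rangle + \tfrac{1}{2}L_{\nabla}\eta^{2}\|\tilde g^{(t)}\|^{2},
\end{equation*}
and rewrite the inner product as $\|\nabla\mathcal{L}(\mathbf{W}^{(t)})\|^{2}+\langle\nabla\mathcal{L}(\mathbf{W}^{(t)}),\tilde g^{(t)}-\nabla\mathcal{L}(\mathbf{W}^{(t)})\rangle$. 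With $d_t\le\tau$ denoting the bounded asynchronous delay, I would then split
\begin{equation*}
\tilde g^{(t)}-\nabla\mathcal{L}(\mathbf{W}^{(t)}) = \bigl(\tilde g^{(t)}-\nabla\mathcal{L}(\mathbf{W}^{(t-d_t)})\bigr) + \bigl(\nabla\mathcal{L}(\mathbf{W}^{(t-d_t)})-\nabla\mathcal{L}(\mathbf{W}^{(t)})\bigr).
\end{equation*}
Theorem~\ref{thm:gradient bound} directly bounds the first piece by a constant $C_{\text{rep}}$ that depends only on the per-layer staleness levels $\epsilon^{(\ell)}$, the Lipschitz constants, and the maximal subgraph degrees. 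The second piece is handled by Lipschitz smoothness combined with the update rule: expanding $\mathbf{W}^{(t)}-\mathbf{W}^{(t-d_t)}=-\eta\sum_{s=t-d_t}^{t-1}\tilde g^{(s)}$ yields $\|\nabla\mathcal{L}(\mathbf{W}^{(t-d_t)})-\nabla\mathcal{L}(\mathbf{W}^{(t)})\|\le L_{\nabla}\eta\sum_{s=t-d_t}^{t-1}\|\tilde g^{(s)}\|$. Telescoping the descent inequality from $t=1$ to $T$ collapses the left-hand side to $\mathcal{L}(\mathbf{W}^{(1)})-\mathcal{L}(\mathbf{W}^{*})$, and rearranging isolates $\sum_{t=1}^{T}\|\nabla\mathcal{L}(\mathbf{W}^{(t)})\|^{2}$, giving the stated bound with $P(\eta)=c_{0}C_{\text{rep}}^{2}+c_{1}\eta+c_{2}\eta^{2}$ after the cross-time gradient-norm contributions are collected into a constant $B$.

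The main obstacle is the coupling introduced by the parameter-delay term: after Cauchy--Schwarz on $\eta\sum_{s=t-d_t}^{t-1}\|\tilde g^{(s)}\|$ one inherits $\eta^{2}\tau^{2}\sum_{s}\|\tilde g^{(s)}\|^{2}$ contributions that overlap across different $t$'s and must be re-indexed over $s$. A clean way to close the argument is either to assume a uniform bound $\|\tilde g^{(t)}\|\le G$, in which case these terms contribute only $\mathcal{O}(\eta^{2}\tau^{2}G^{2})$ to $P(\eta)$, or to require $\eta$ small enough that $\eta^{2}\tau^{2}L_{\nabla}^{2}$ can be absorbed back into the $\|\nabla\mathcal{L}(\mathbf{W}^{(t)})\|^{2}$ mass on the left via a Gronwall-style rearrangement that determines the constant $B$. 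A secondary subtlety is that the representation-staleness levels $\epsilon^{(\ell)}$ depend implicitly on the push/pull period $N$ and on past parameter drift, so one must verify that this implicit $\eta$-dependence keeps $P(\eta)$ a genuine polynomial of degree at most two as claimed.
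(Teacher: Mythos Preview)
Your plan has a genuine gap in how you handle the first piece of your decomposition. In the asynchronous mode, at each global iteration the server receives $\tilde g^{(t)}=\nabla\tilde{\mathcal L}_m(\mathbf W^{(t-d_t)})$, the gradient of a \emph{single} subgraph's local loss computed with stale representations and delayed parameters. Your first piece is therefore
\[
\tilde g^{(t)}-\nabla\mathcal L(\mathbf W^{(t-d_t)})
=\nabla\tilde{\mathcal L}_m(\mathbf W^{(t-d_t)})-\nabla\mathcal L(\mathbf W^{(t-d_t)}),
\]
and Theorem~\ref{thm:gradient bound} does \emph{not} bound this: that theorem compares the \emph{averaged} DIGEST gradient $\frac{1}{M}\sum_m\nabla\tilde{\mathcal L}_m$ with the averaged exact gradient, not a single local gradient with the global one. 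Even with zero representation staleness, $\nabla\mathcal L_m(\mathbf W)$ differs from $\nabla\mathcal L(\mathbf W)$ by the usual subgraph heterogeneity, which your argument never controls. Consequently, your inner-product rewrite $\langle\nabla\mathcal L(\mathbf W^{(t)}),\tilde g^{(t)}\rangle=\|\nabla\mathcal L(\mathbf W^{(t)})\|^2+\langle\nabla\mathcal L(\mathbf W^{(t)}),\tilde g^{(t)}-\nabla\mathcal L(\mathbf W^{(t)})\rangle$ cannot yield a useful lower bound on the first-order descent term, and the constant $c_0 C_{\text{rep}}^2$ you propose for $P(\eta)$ is not justified.

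The paper closes this gap by an entirely different device: it introduces an additional assumption (Assumption~4 in the appendix) postulating $\langle\nabla\mathcal L(\mathbf W),\nabla\tilde{\mathcal L}_m(\mathbf W)\rangle\ge\beta\|\nabla\mathcal L(\mathbf W)\|^2$ and $\|\nabla\tilde{\mathcal L}_m(\mathbf W)\|\le V\|\nabla\mathcal L(\mathbf W)\|$ for every $m$. This bundles representation staleness and local--global discrepancy into the two constants $\beta,V$, and the claimed constant is simply $B=\beta-\tfrac{V^2}{2}$, not the output of any absorption argument. The paper then writes $\nabla\tilde{\mathcal L}_m(\mathbf W^{(t-\tau)})=\nabla\tilde{\mathcal L}_m(\mathbf W^{(t)})+r_m$ and bounds $\|r_m\|\le \eta K L_f C_f$ using the assumed $C_f$-Lipschitz \emph{continuity} of $\mathcal L$ (so all gradients are uniformly bounded by $C_f$); this is exactly the uniform-$G$ route you mention as an option, but obtained from the Lipschitz-continuity hypothesis in the theorem statement rather than imposed separately. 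With these ingredients the five resulting terms are bounded individually, no cross-time coupling arises, and one obtains $P(\eta)=\tfrac12\eta^2K^2C_f^2L_f^2+(1+V)\eta KC_f^2L_f$, which has no constant term.
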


\section{Experiments}
\vspace{-2mm}
In this section, we evaluate DIGEST and compare DIGEST against two state-of-the-art distributed GNNs training frameworks as baselines in terms of training efficiency and scalability.
%to show the efficiency and scalability of DIGEST. 
Considering the distinct training time per epoch between DIGEST and other baselines, we report the F1 scores on validation dataset and training loss over training time, instead of over communication rounds,  in the results. This way it makes a fairer comparison in terms of training performance and efficiency. 
%By this way, 
%The comparisons of efficiency are more fair.

\begin{figure*}[htbp]
  \centering
  \subfigure[OGB-Arxiv ]{\includegraphics[scale=0.225,trim=0 0 0 0, clip]{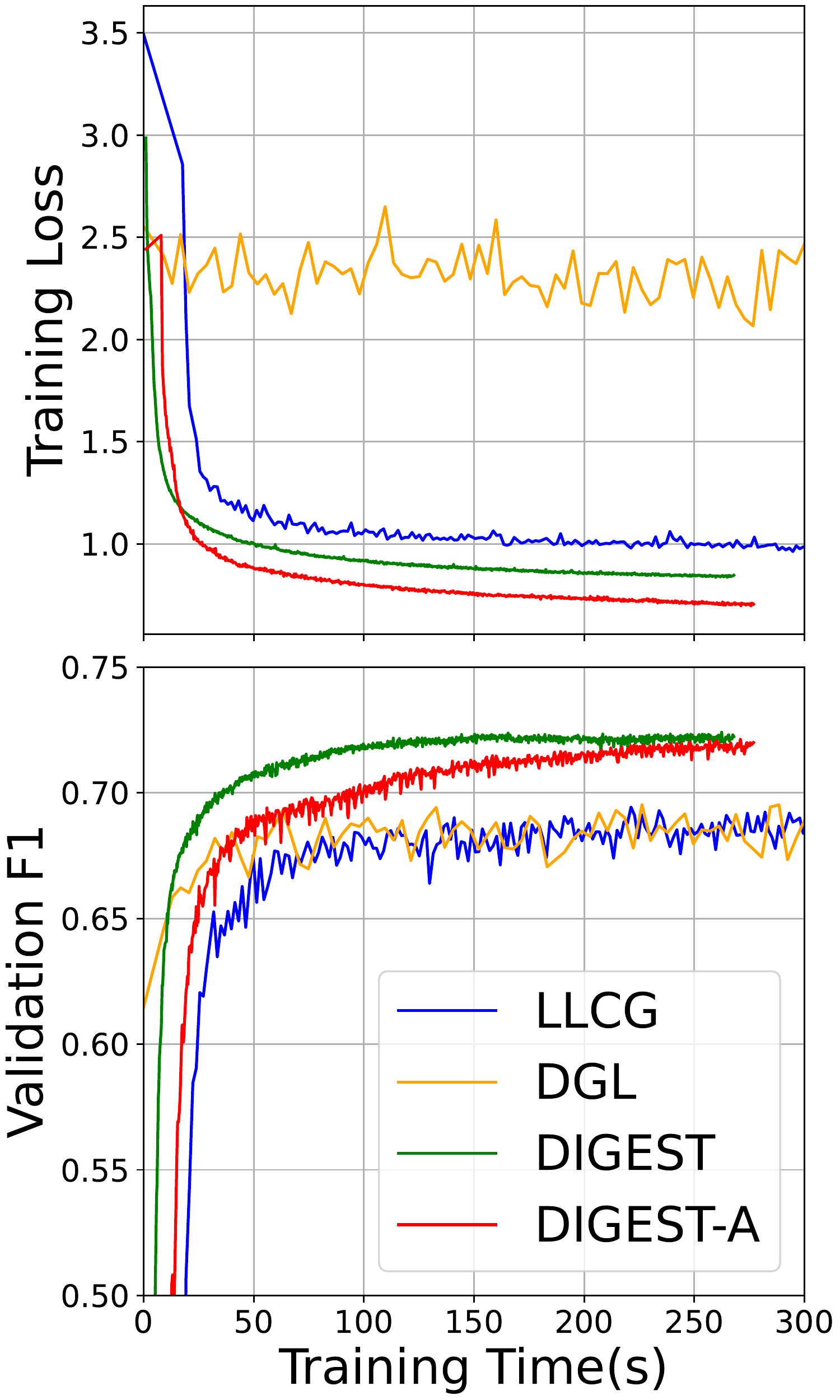}}
  \subfigure[Flickr]{\label{fig:gcn_flickr}\includegraphics[scale=0.225,trim=0 0 0 0, clip]{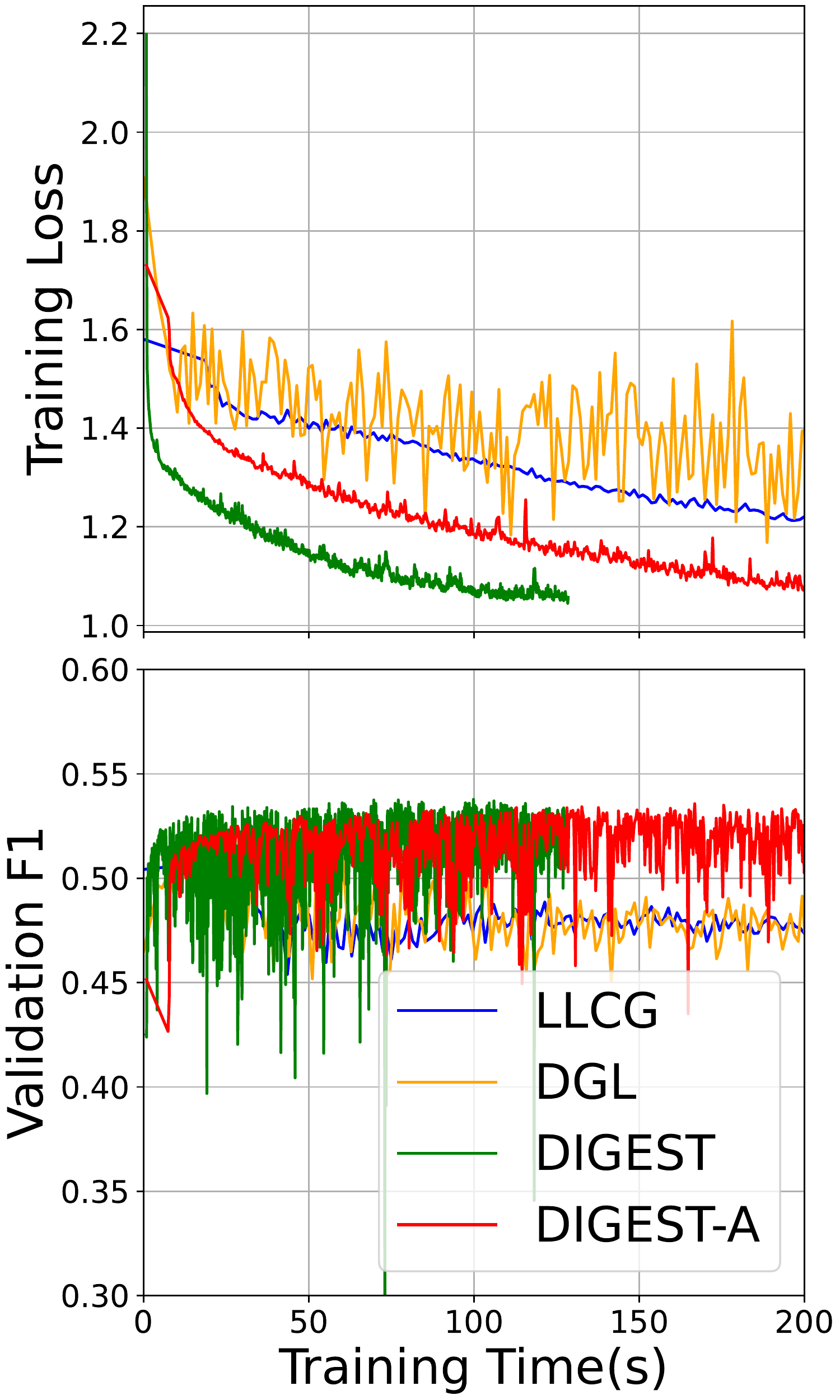}}
  \subfigure[Reddit]{\includegraphics[scale=0.225,trim=0 0 0 0, clip]{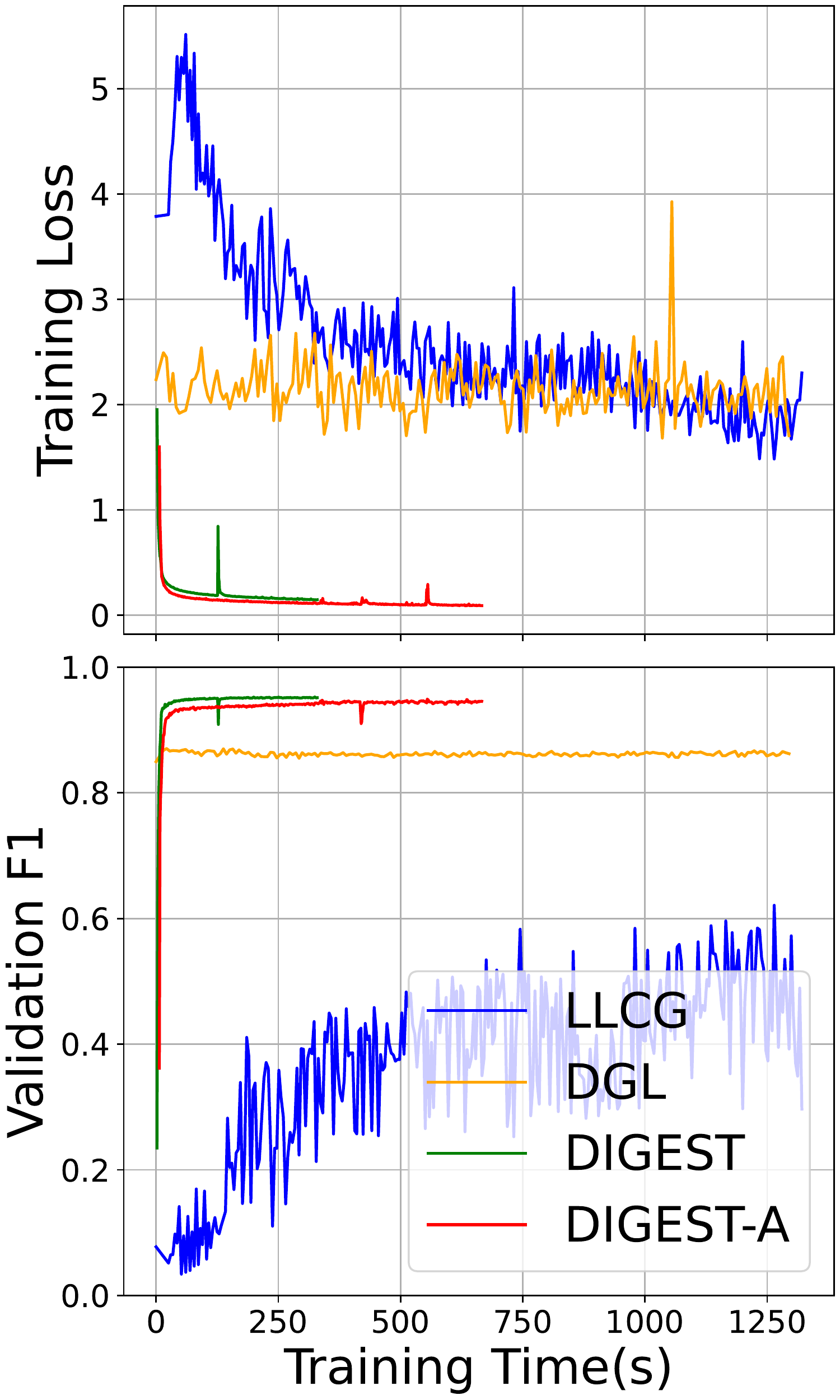}}
  \subfigure[OGB-Products]{\includegraphics[scale=0.225,trim=0 0 0 0, clip]{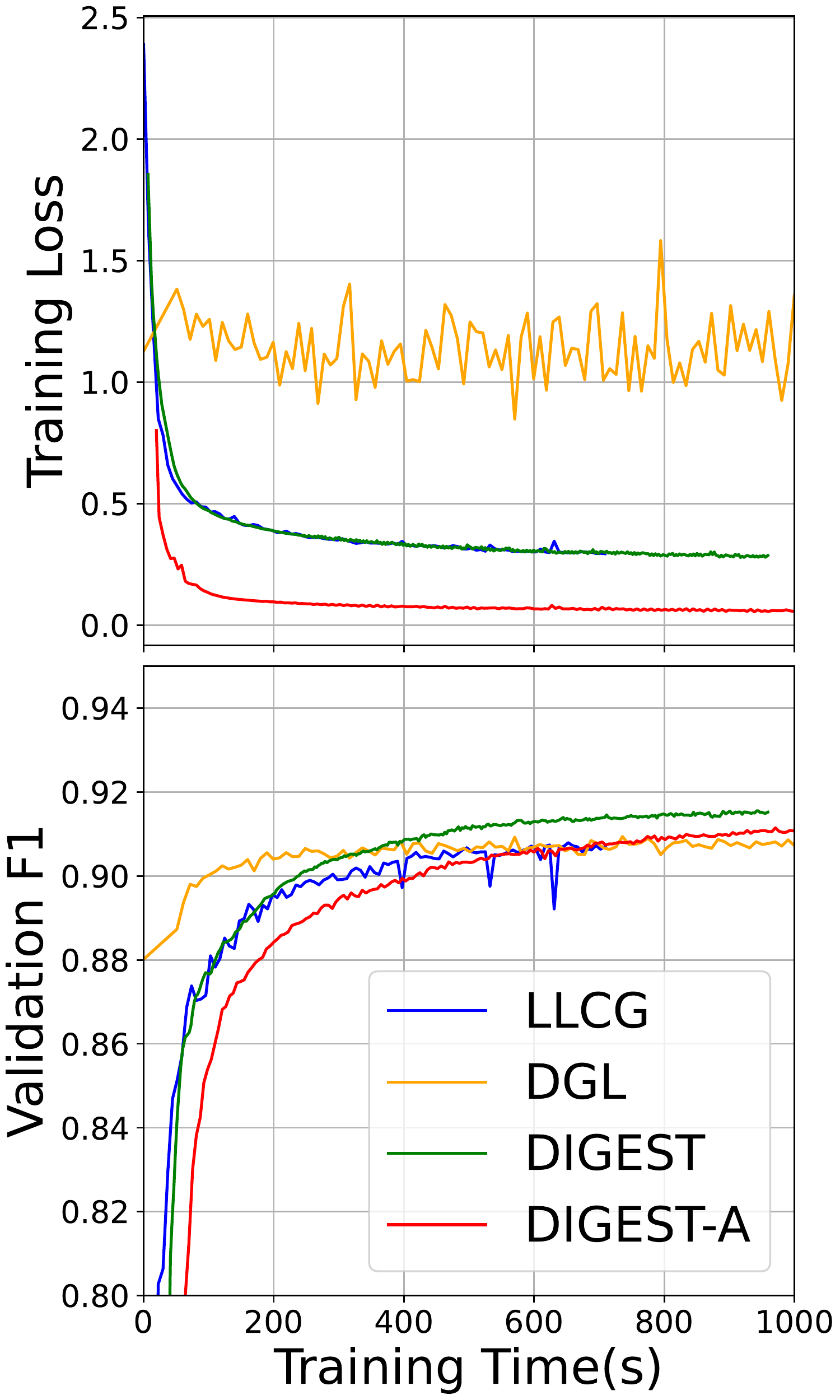}}
  \vspace{-6pt}
  \caption{Performance comparison of the GCN training frameworks on four benchmark datasets. The top four subfigures show the training loss over training time, and the bottom four subfigures show the global validation F1 scores during the whole training process. (Best viewed in color.)
  }\label{fig:gcn}
  \vspace{-1pt}
%   \vspace{-1em}
\end{figure*}

\vspace{-6pt}
\subsection{Experiment setting}
\vspace{-4pt}

\textbf{Implementation and Setup.}\label{sec:setup} We have implemented DIGEST and other comparison GNNs training methods all in PyTorch~\cite{paszke2019pytorch}. 
For all the experiments, we simulate a distributed training environment using an EC2 {\texttt{g4dn.metal}} virtual machine (VM) instance on AWS, which has $8$ NVIDIA T4 GPUs, $96$ vCPUs, and $384$~GB main memory. We implemented the shared-memory KVS using the Plasma in-memory object store\footnote{\url{https://arrow.apache.org/docs/python/plasma.html}} for representation storage and retrieval.

\textbf{Baselines.} Recall in Section~\ref{sec:background} we categorize existing distributed GNN training into two types of general methods. In evaluation, we choose two state-of-the-art distributed training frameworks, one from each category as the baseline. For the first category, we choose LLCG~\cite{ramezani2021learn}, which partitions a graph into subgraphs and trains each subgraph strictly independently without incurring any communication among subgraphs. LLCG uses a central server to aggregate local models from each device and performs global training using mini-batches with full neighbor information to ensure that the model learns the global structure of the graph. LLCG uses this additional step to reduce the information loss caused by graph partitioning.
%\yuec{TODO xxx}.
%In LLCG~\cite{ramezani2021learn}, throughout the training process, the subgraphs are trained strictly independently without any communication between each other, and after the local models are aggregated, the server performs training on a mini-batch with full-neighbors to ensure the model learns the global structure of the graph. with this additional step the information loss caused by graph partitioning can be reduced. 
For the second category, we choose to use DGL~\cite{wang2019deep}, which is a commonly-used, distributed GNN training framework. In contrast to LLCG, DGL requires exchanging node representations among partitioned subgraphs. DGL requires frequent swap operations with other subgraphs for representations during subgraph's local training in each epoch, and therefore, DGL incurs high communication cost. 
%DGL~\cite{wang2019deep} is another famous distributed training framework for GNNs. As opposed to LLCG, exchange of node representations among subgraphs are required. Since the swaps happen during the local training in each epoch, the communications are intensive and time consuming. 

\textbf{Models and datasets.} We report the experimental results of training GCN~\cite{kipf2016semi} and GAT~\cite{velivckovic2017graph} in a distributed manner spanning multiple GPU devices. 
Our framework can also be applied, in a straightforward way, to other prominent GNNs, such as PNA~\cite{corso2020principal}, GCNII~\cite{chen2020simple}. 
Four benchmark node classification datasets, OGB-Arxiv~\cite{hu2020open}, Flickr~\cite{zeng2019graphsaint}, Reddit~\cite{zeng2019graphsaint}, and OGB-Products~\cite{hu2020open}, are used in our evaluation. Please refer to the appendix for the summary of the datasets.

% Please add the following required packages to your document preamble:
% \usepackage{multirow}
\begin{table}[t]
\caption{Performance comparison of distributed GNNs frameworks. F1 score on validation dataset reported. Speedup is calculated by normalizing per-epoch training time against that of DGL.}
\label{tab:my-table}
\resizebox{\textwidth}{!}{%
\begin{tabular}{lccccc|cccl}
\cline{1-9}
\multirow{2}{*}{Method} &
  \multirow{2}{*}{Metric} &
  \multicolumn{4}{c|}{GCN} &
  \multicolumn{3}{c}{GAT} &
  \multicolumn{1}{c}{} \\ \cline{3-9}
 &
   &
  OGB-Arxiv &
  Flickr &
  Reddit &
  OGB-Products &
  OGB-Arxiv &
  Flickr &
  Reddit &
   \\ \cline{1-9}
LLCG &
  \begin{tabular}[c]{@{}c@{}}F1\\ Speedup\end{tabular} &
  \begin{tabular}[c]{@{}c@{}}$69.8\pm0.21$\\ $2.35\times$\end{tabular} &
  \begin{tabular}[c]{@{}c@{}}$50.73\pm0.15$\\ $0.88\times$\end{tabular} &
  \begin{tabular}[c]{@{}c@{}}$62.09\pm0.41$\\ $1.47\times$\end{tabular} &
  \begin{tabular}[c]{@{}c@{}}$90.79\pm0.16$\\ $1.396\times$\end{tabular} &
  \begin{tabular}[c]{@{}c@{}}$68.84\pm0.22$\\ $1.787\times$\end{tabular} &
  \begin{tabular}[c]{@{}c@{}}$43.98\pm0.32$\\ $0.923\times$\end{tabular} &
  \begin{tabular}[c]{@{}c@{}}$91.1\pm0.17$\\ $9.956\times$\end{tabular} &
   \\ \cline{1-9}
DGL &
  \begin{tabular}[c]{@{}c@{}}F1\\ Speedup\end{tabular} &
  \begin{tabular}[c]{@{}c@{}}$69.9\pm0.17$\\ $1\times$\end{tabular} &
  \begin{tabular}[c]{@{}c@{}}$50.9\pm0.13$\\ $1\times$\end{tabular} &
  \begin{tabular}[c]{@{}c@{}}$87.02\pm0.23$\\ $1\times$\end{tabular} &
  \begin{tabular}[c]{@{}c@{}}$91.01\pm0.12$\\ $1\times$\end{tabular} &
  \begin{tabular}[c]{@{}c@{}}$70.34\pm0.17$\\ $1\times$\end{tabular} &
  \begin{tabular}[c]{@{}c@{}}$51.50\pm0.27$\\ $1\times$\end{tabular} &
  \begin{tabular}[c]{@{}c@{}}$92.58\pm0.12$\\ $1\times$\end{tabular} &
   \\ \cline{1-9}
\textbf{DIGEST} &
  \begin{tabular}[c]{@{}c@{}}F1\\ Speedup\end{tabular} &
  \begin{tabular}[c]{@{}c@{}}$72\pm0.23$\\ $17.41\times$\end{tabular} &
  \begin{tabular}[c]{@{}c@{}}$53.78\pm0.21$\\ $11.06\times$\end{tabular} &
  \begin{tabular}[c]{@{}c@{}}$95.23\pm0.43$\\ $7.86\times$\end{tabular} &
  \begin{tabular}[c]{@{}c@{}}$91.55\pm0.1$\\ $3.096\times$\end{tabular} &
  \begin{tabular}[c]{@{}c@{}}$68.35\pm0.41$\\ $11.49\times$\end{tabular} &
  \begin{tabular}[c]{@{}c@{}}$52.08\pm0.21$\\ $6.591\times$\end{tabular} &
  \begin{tabular}[c]{@{}c@{}}$94.19\pm0.15$\\ $21.817\times$\end{tabular} &
   \\ \cline{1-9}
\textbf{DIGEST-A} &
  F1 &
  $71.9\pm0.16$ &
  $53.1\pm0.32$ &
  $94.55\pm0.37$ &
  $91.54\pm0.1$ &
  $69.04\pm0.13$ &
  $52.16\pm0.17$ &
  $93.95\pm0.22$ &
   \\ \cline{1-9}
\end{tabular}
}
\vspace{-6mm}
\end{table}

\begin{figure}[!htb]
\vspace{-2mm}
   \begin{minipage}{0.33\textwidth}
     \centering
     \includegraphics[scale=0.25]{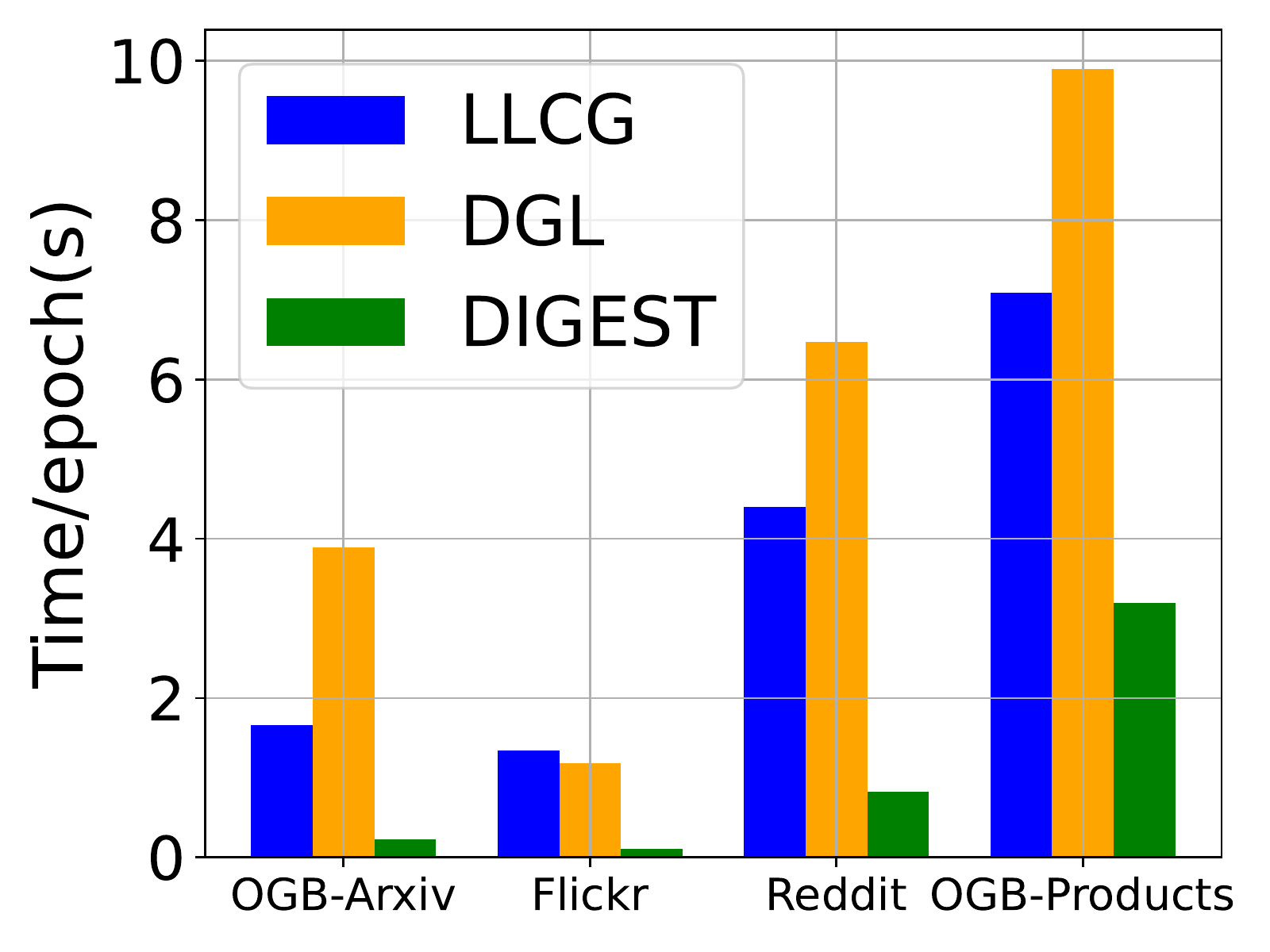}
     \caption{Training time/epoch.}\label{fig:time}
     \vspace{-6pt}
   \end{minipage}\hfill
   \begin{minipage}{0.33\textwidth}

     \centering
     \includegraphics[scale=0.25]{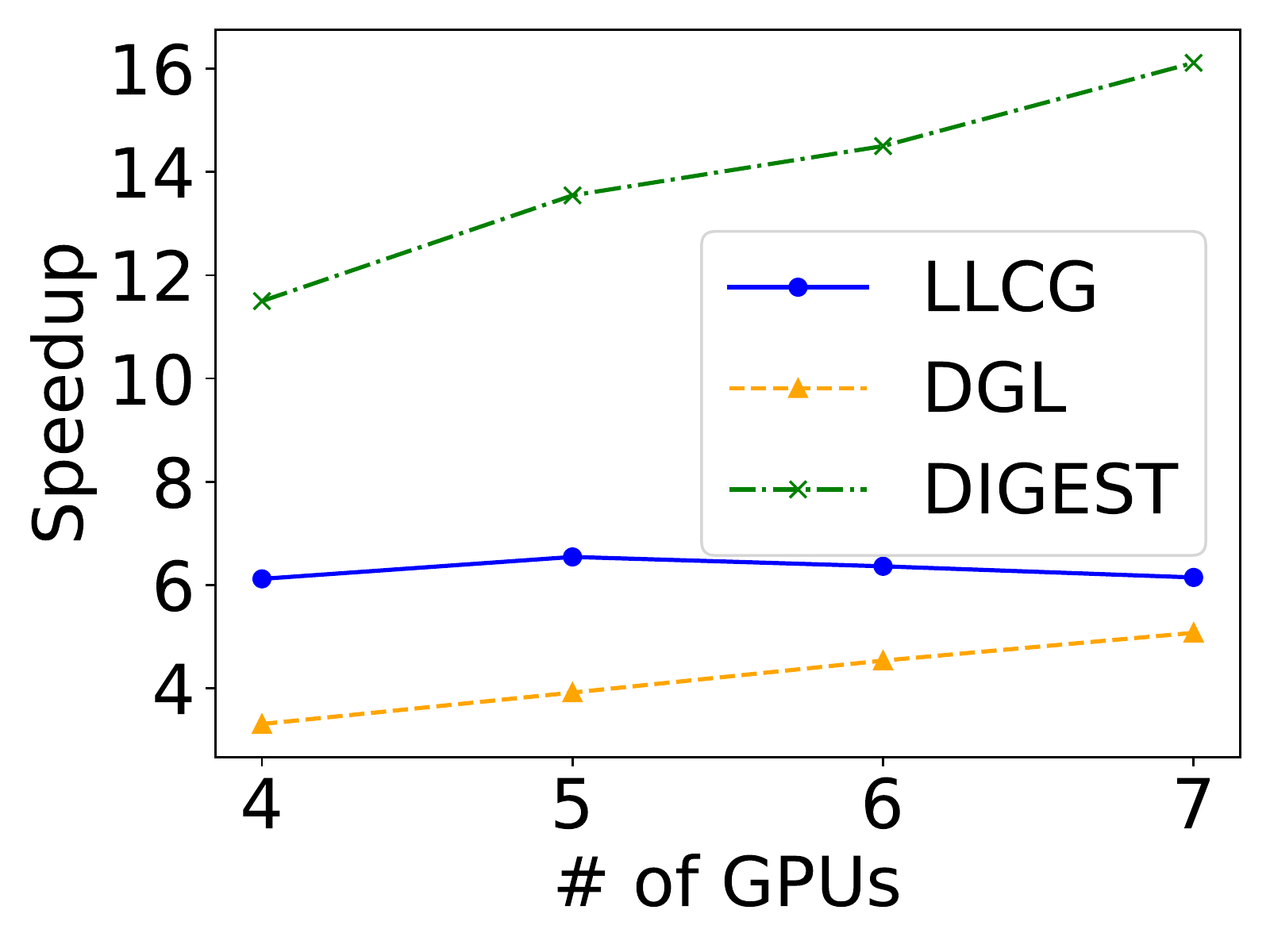}
     \caption{Scalability.}\label{fig:scalability}
     \vspace{-6pt}
   \end{minipage}\hfill
    \begin{minipage}{0.33\textwidth}

     \centering
     \includegraphics[scale=0.25]{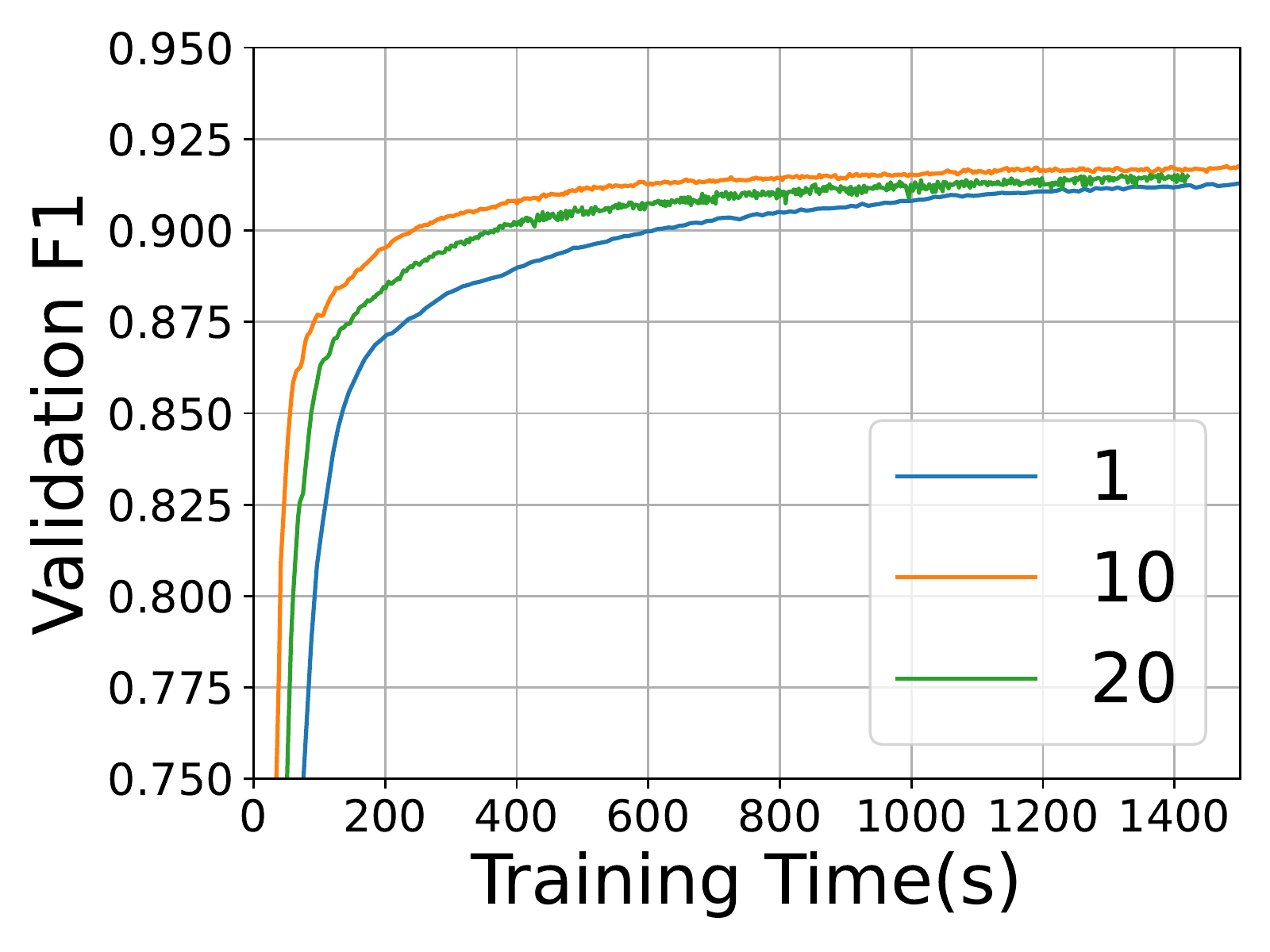}
     \caption{Synchronization interval. Better seen in color.}\label{fig:interval}
     \vspace{-3pt}
   \end{minipage}
   \vspace{-3mm}
\end{figure}

\vspace{-6pt}
\subsection{Experimental results}
\vspace{-4pt}

In this section, we evaluate both sync \& async versions of DIGEST, LLCG, and DGL on the four datasets. Due to the page limit, we move parts of our evaluation results to the Appendix. 

\textbf{Efficiency of DIGEST.} 
We first evaluate the training performance of DIGEST and DIGEST-A.
As shown in Figure~\ref{fig:gcn}, DIGEST outperforms both LLCG and DGL for all the datasets when performing distributed training on a pure GCN. LLCG performs worst particularly for the Reddit dataset, because in the global server correction of LLCG, only a mini-batch is trained and it is not sufficient to correct the plain GCN. This is also the reason why the authors of LLCG report the performance of a complex model with mixing GCN layers and GraphSAGE layers~\cite{ramezani2021learn}. 
DGL achieves good performance on some dataset (e.g., OGB-products) with uniform node sampling strategy and real-time representation exchanging. However, frequent communication also leads to slow performance increasing for dataset Flickr  (Figure~\ref{fig:gcn_flickr}) and poor performance for all four datasets. 
%The 
%inherent
%\yuec{what sampling strategy??} node sampling strategy and real-time representation exchanging make DGL achieve good performance on some dataset (e.g. OGB-products), 
%but resulting in slow performance increasing and poor performance. 
DIGEST and DIGEST-A avoid these issues and therefore achieve satisfying performance over the training time. DIGEST-A is slowly catching up DIGEST due to the diverse model parameters used by subgraphs in the early training period. 

% We also notice that DIGEST has lower training loss in large graph (OGB-Arxiv) than small graph (Flickr).

%Figure~\ref{fig:time} shows 
We measure the 
training time per epoch 
%of training GCN 
as shown in Figure~\ref{fig:time}. Since the representation synchronization is only performed before the start or after the end of local training, DIGEST takes significantly shorter training time per epoch than that of LLCG and DGL. Furthermore, DIGEST performs periodic synchronization instead of per-epoch synchronization, which further shortens the training time.
%the training time per epoch of DIGEST is extreme less than that of LLCG and DGL. The periodic synchronization instead of per-epoch synchronization further shortens the training time.

Table~\ref{tab:my-table} presents the detailed numbers for the comparison of three frameworks on the four datasets. For all the cases except GAT on OGB-Arxiv, DIGEST achieves leading F1 scores on the validation dataset, demonstrating the efficacy of DIGEST's design.

\begin{wrapfigure}{r}{0.35\textwidth}
  \vspace{-20pt}
  \begin{center}
    \includegraphics[width=0.35\textwidth]{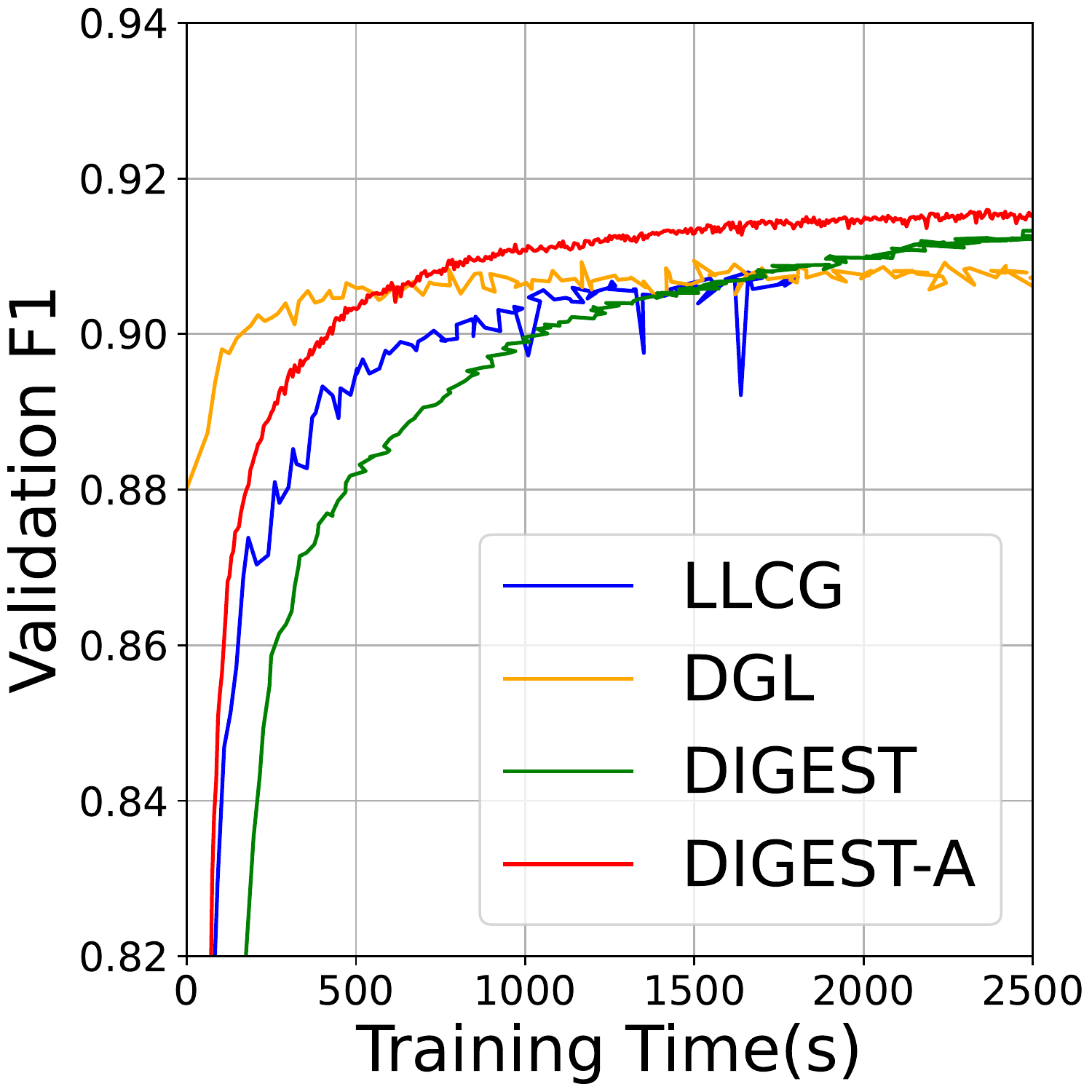}
  \end{center}
  \caption{Performance comparison of OGB-Products when trained in a heterogeneous environment.}
%   \vspace{-10pt}
  \label{fig:heter}
%   \vspace{-7mm}
\end{wrapfigure}

\textbf{Scalability of DIGEST.} We evaluate the scalability of three frameworks by training a GCN on OGB-Products with varied number of GPUs. We use average training time per epoch against that of DGL with a single GPU to calculate the speedup results. As shown in Figure~\ref{fig:scalability}, DIGEST shows the best scalability compared to the other two. The speedup rises with the number of GPUs used during training. We observe a similar trend for DGL, but the relative speedup for DGL is significantly smaller than that for DIGEST, due to the using of stale representations instead of real-time representations.

\textbf{Synchronization frequency.} 
\label{sec:interval}
% Recall the trade off of performance over time and synchronization frequency of representations  in section~\ref{sec:3.2}, 
We next perform a sensitivity analysis by varying the synchronization intervals for OGB-Products to study how the synchronization frequency would affect the training performance. 
%We perform experiments on OGB-Products with distinct synchronization intervals to see how the synchronization frequency affect the performance.
As shown in Figure~\ref{fig:interval}, DIGEST achieves the highest F1 score over training time when configured to perform synchronization of stale representations every $10$ epochs. A large interval ($20$) or a small interval ($1$) results in performance degradation, due to the long term loss of graph information or additional communication cost.

\textbf{Training in heterogeneous environment.} 
\label{sec:heter}
Finally, we test DIGEST's asynchronous training mode.
As stated in Section~\ref{sec:3.2}, asynchronous training is better suited for GNN training in heterogeneous environments. 
%asynchronous approaches are naturally fit for handling stragglers. 
In this test, we randomly select one subgraph as the straggler before the training starts.
%, one subgraph is selected as straggler. 
To simulate the straggler lagging caused by limited computing capability, a random delay ranging from 8 to 10 seconds is added to the chosen straggler during the whole training process. We can see from Figure~\ref{fig:heter} that DIGEST-A performs much better than other three synchronous methods and converge to high F1-score at the early stage of the training. This is because asynchronous mode effectively eliminates GPU's blocking caused by waiting with significantly improved GPU utilization.

% \begin{table}[]
% \centering
% \caption{}
% \label{tab:my-table}
% \resizebox{\columnwidth}{!}{%
% \begin{tabular}{lllllllll|lllllll}
% \cline{1-15}
% \multirow{2}{*}{Perf./SpeedUp} & \multicolumn{8}{c|}{GCN}                                                                                                           & \multicolumn{6}{c}{GAT}                                                                           &  \\ \cline{2-15}
%                               & \multicolumn{2}{c}{OGB-Arxiv} & \multicolumn{2}{c}{Flickr}     & \multicolumn{2}{c}{Reddit}    & \multicolumn{2}{c|}{OGB-Products} & \multicolumn{2}{c}{OGB-Arxiv}  & \multicolumn{2}{c}{Flickr}     & \multicolumn{2}{c}{Reddit}      &  \\ \cline{1-15}
% LLCG                           & $69.8\pm0.21$ & $2.35\times$  & $50.73\pm0.15$ & $0.88\times$  & $62.09\pm0.41$ & $1.47\times$ & $90.79\pm0.16$   & $1.396\times$  & $68.84\pm0.22$ & $1.787\times$ & $43.98\pm0.32$ & $0.923\times$ & $91.1\pm0.17$  & $9.956\times$  &  \\ \cline{1-15}
% DGL                            & $69.9\pm0.17$ & $1\times$     & $50.9\pm0.13$  & $1\times$     & $87.02\pm0.23$ & $1\times$    & $91.01\pm0.12$   & $1\times$      & $70.34\pm0.17$ & $1\times$     & $51.50\pm0.27$ & $1\times$     & $92.58\pm0.12$ & $1\times$      &  \\ \cline{1-15}
% DIGEST                         & $72\pm0.23$   & $17.41\times$ & $53.78\pm0.21$ & $11.06\times$ & $95.23\pm0.43$ & $7.86\times$ & $91.55\pm0.1$    & $3.096\times$  & $68.35\pm0.31$ & $11.49\times$ & $52.08\pm0.21$ & $6.591\times$ & $94.19\pm0.15$ & $21.817\times$ &  \\ \cline{1-15}
% \end{tabular}%
% }
% \end{table}

\vspace{-6pt}
\section{Conclusion}
\vspace{-6pt}

There are two general categories in distributed GNN training.
Partition-based methods suffer from graph information loss, while propogation-based methods suffer from high communication cost. 
%Partitioning-based and propagating-based distributed GNN training frameworks suffer from graph information loss or large communication overhead. 
%To conquer these challenges, 
In this work we present DIGEST, a novel distributed GNN training framework that synergizes the complementary strengths of both methods by leveraging stale representations intelligently. We provide rigorous theoretical analysis to prove that DIGEST has competitive convergence rate and bounded error due to staleness. Extensive experiments on four benchmark datasets validate our analysis and demonstrate the efficiency and scalability of DIGEST.

\bibliographystyle{abbrv}
\bibliography{digest}

%%%%%%%%%%%%%%%%%%%%%%%%%%%%%%%%%%%%%%%%%%%%%%%%%%%%%%%%%%%%
%%%%%%%%%%%%%%%%%%%%%%%%%%%%%%%%%%%%%%%%%%%%%%%%%%%%%%%%%%%%

\appendix
\newpage

\section{Appendix}

In this section, we describe detailed experimental setup, additional experimental results, and complete proofs.
We reuse part of code adopted from GNNAutoscale~\cite{fey2021gnnautoscale}; our code is available at: \url{https://anonymous.4open.science/r/DIGESTA-78F2/}. 
Please note that the code is subjected to reorganization to improve the readability.

\subsection{Experimental Setup Details}
As mentioned in Section~\ref{sec:setup}, all the experiments are done on an EC2 {\texttt{g4dn.metal}} virtual machine (VM) instance on AWS, which has $8$ NVIDIA T4 GPUs, $96$ vCPUs, and $384$~GB main memory. Other important information including operation system version, Linux kernel version, and CUDA version is summarized in Table~\ref{tab:versions}. For fair comparison, we use the same optimizer (Adam), learning rate, and graph partition algorithm for all the three frameworks, DIGEST, LLCG, and DGL. 
For parameters that are unique to both LLCG and DGL, such as the number of neighbors sampled from each layer for each node, we choose the default value for both LLCG and DGL. 
%\yuec{not DGL??}. 
Each of the three frameworks has a set of parameters that are exclusively unique to that framework; for these exclusive parameters, we tune them in order to achieve the best performance.
%In this setup, we tune exclusive parameters of three framework to achieve the best performance. 
Please refer to the configuration files under \texttt{run/conf/model} for detailed configuration setups for all the models and datasets.

\begin{table}[h]
\centering
\caption{Summary of environmental setup of our testbed.}
\label{tab:versions}
\resizebox{\textwidth}{!}{%
\begin{tabular}{ccccccc}
\hline
OS    & Linux kernel & CUDA & Driver    & PyTorch & PyTorch Geometric & PyTorch Sparse \\ \hline
Ubuntu~18.04 & 5.4.0  & 11.6 & 510.47.03 & 1.10.0  & 2.0.4             & 0.6.13         \\ \hline
\end{tabular}%
}
\end{table}

%In this paper, w
We use four datasets: OGB-Arxiv~\cite{hu2020open}, Flickr~\cite{zeng2019graphsaint}, Reddit~\cite{zeng2019graphsaint}, and OGB-Products~\cite{hu2020open} for evaluation. The detailed information of these datasets is summarized in Table~\ref{tab:dataset}.

\begin{table}[h]
\centering
\caption{Summary of dataset statistics.}
\label{tab:dataset}
\resizebox{\textwidth}{!}{%
\begin{tabular}{llllll}
\hline
Dataset      & \#~Nodes   & \#~Edges     & \#~Features & \#~Classes & Train $\%$ / Validation $\%$ / Test $\%$ \\ \hline
Flickr       & 89,250    & 899,756     & 500        & 7         & 50\% / 25\% / 25\%           \\
Reddit       & 232,965   & 23,213,838  & 602        & 41        & 66\% / 10\% / 24\%           \\
OGB-Arixv    & 169,343   & 2,315,598   & 128        & 40        & 53.7\% / 17.6\% / 28.7\%     \\
OGB-Products & 2,449,029 & 123,718,280 & 100        & 47        & 8\% / 2\% / 90\%             \\ \hline
\end{tabular}%
}
\end{table}

\begin{figure}[h]
  \centering
  \subfigure[OGB-Arxiv ]{\includegraphics[scale=0.3,trim=0 0 0 0, clip]{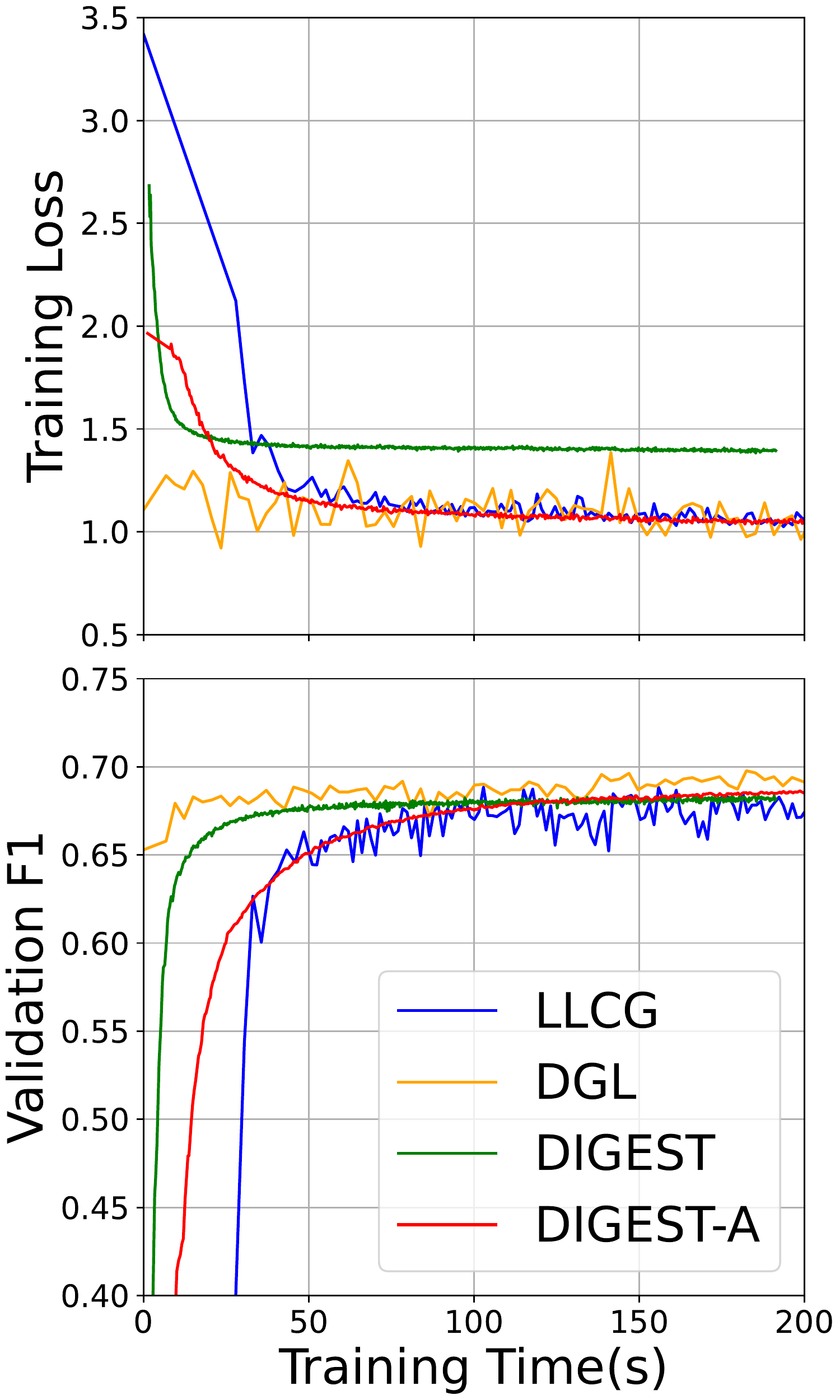}}
  \subfigure[Flickr]{\includegraphics[scale=0.3,trim=0 0 0 0, clip]{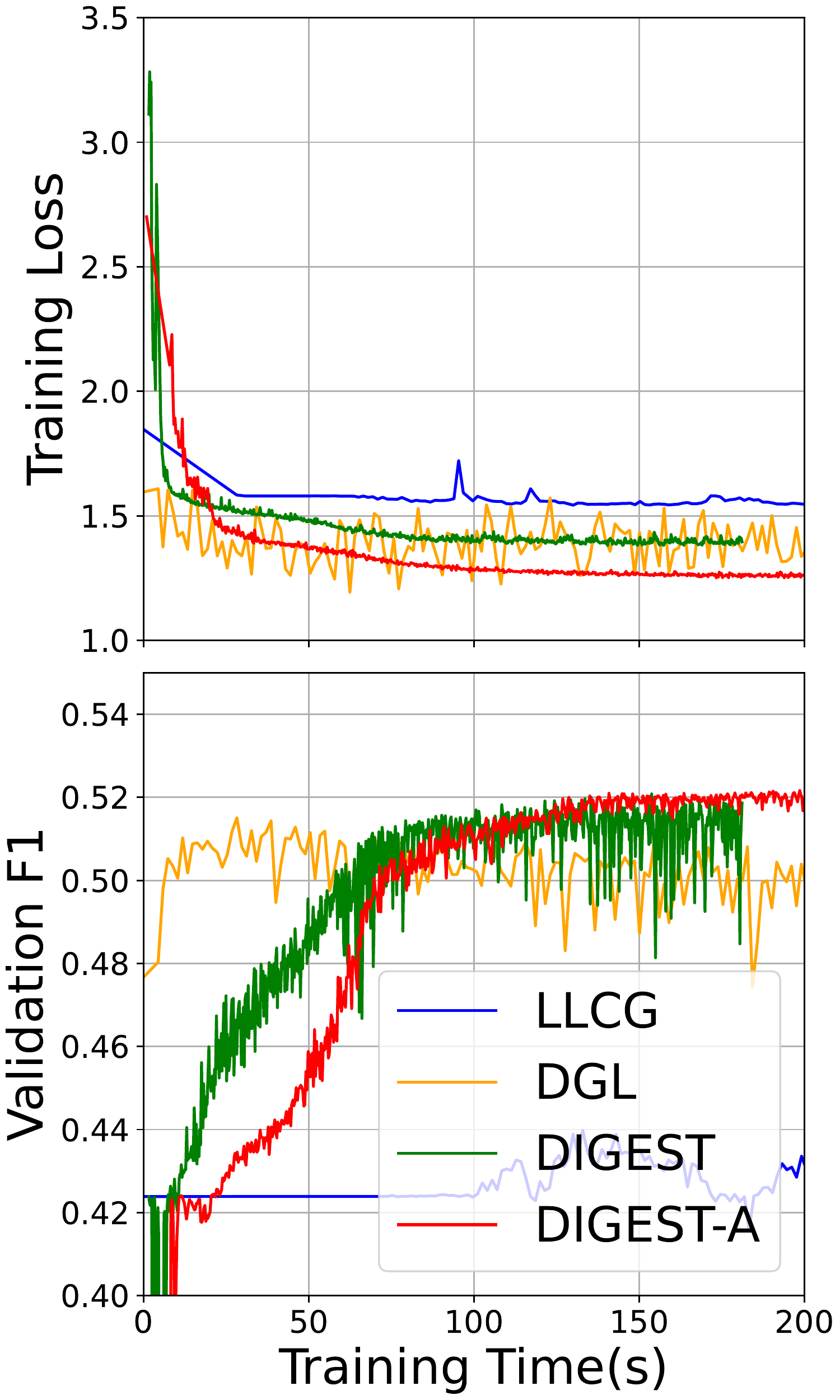}}
  \subfigure[Reddit]{\includegraphics[scale=0.3,trim=0 0 0 0, clip]{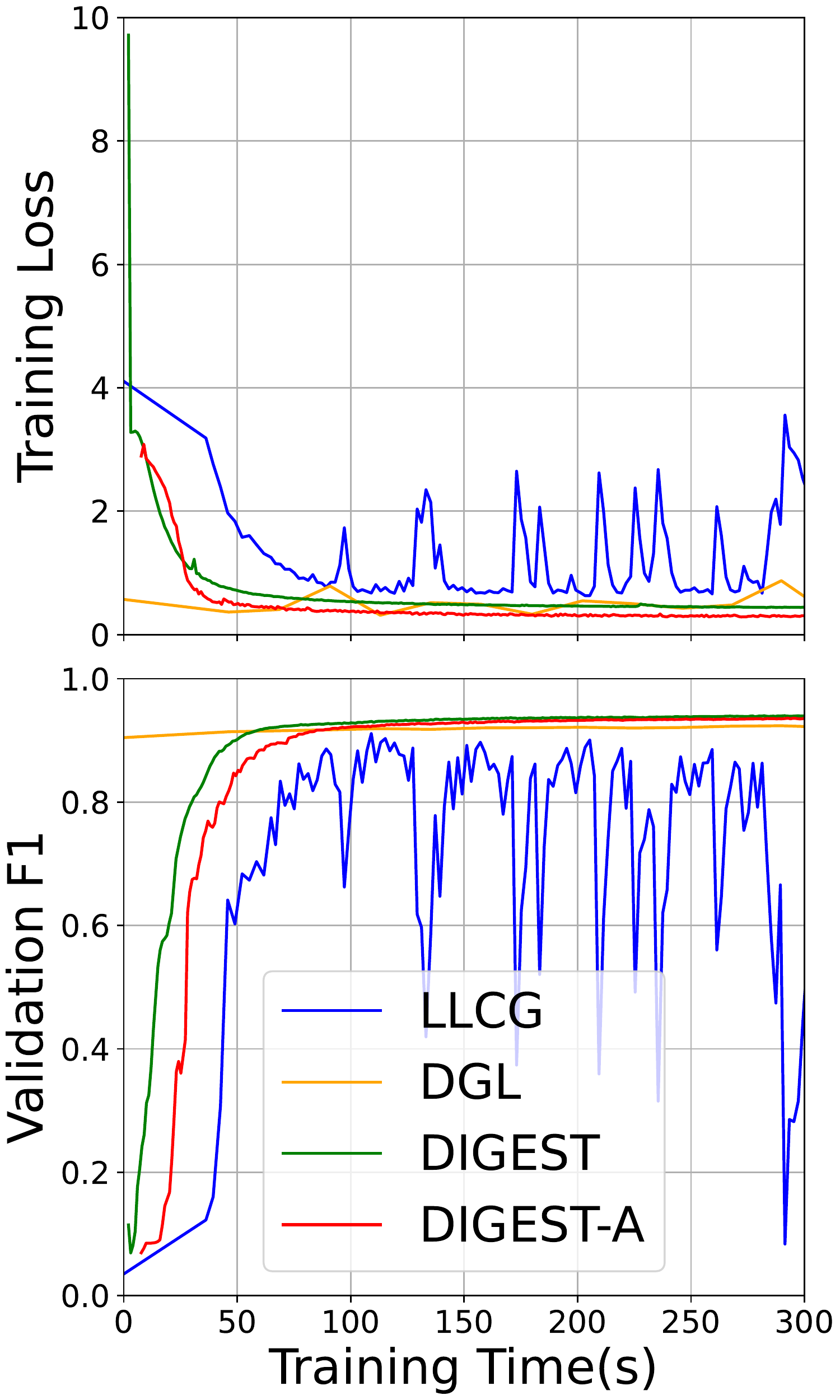}}
% \vspace{-10pt}
  \caption{Performance comparison of different distributed GAT training methods on four benchmark datasets. The top three subfigures show the training loss during the whole training process;  the bottom three subfigures show the global validation F1 scores during the whole training process.
  }\label{fig:gat}
%   \vspace{-1em}
\end{figure}

\subsection{Additional Experimental Results}

\subsubsection{Performance of GAT Training}

We first show the learning curves of training GAT with three methods on three different datasets. As shown in Figure~\ref{fig:gat}, for dataset Flickr and Reddit, DIGEST acheives the best validation F1 score over training time of all the three frameworks.
%comparing with other two frameworks. 
For dataset OGB-Arxiv, the performance of DIGEST is slightly worse than DGL but still outperforms LLCG. 
Specifically, LLCG's training curves are not stable and fluctuate dramatically for both GCN and GAT on Reddit. 
This is because Reddit is much denser compared to other datasets, and in this case, the sampling process of the global server correction in LLCG has difficulty capturing all the information loss due to the cut-edges. 
%In contrast to 
Unlike LLCG, DIGEST's training curves are much smoother not only for GCN training but also for GAT training.

%\subsubsection{Efficiency of representation Communication}
\subsubsection{Memory Overhead}

\if 0
In this section, we first evaluate the communication cost in terms of the ratio of the communication time to the overall training time per epoch. 
Since it is non-trivial
%is not straightforward 
to obtain this statistics from DGL, we only report the results of synchronous-mode DIGEST. 
We can see from Figure~\ref{fig:comm} that, for smaller datasets OGB-Arxiv, Flickr and Reddit, the communication time taken by both the pull and push operations occupies less than $8\%$ of the epoch time. Even for the largest dataset OGB-Products that contains millions of nodes, the ratio is still relatively low---only $12.96\%$---again demonstrating the communication efficiency of DIGEST. 
\textcolor{blue}{}
\fi 

In this section we quantify the memory overhead introduced by DIGEST. 
Figure~\ref{fig:mem} shows the ratios of the number of out-of-subgraph nodes to the number of in-subgraph nodes across four datasets. 
This ratio quantifies additional memory consumption compared to methods that does not use any information of the neighboring nodes during training. 
\begin{figure}[ht!]
  %\vspace{-15pt}
  \begin{center}
    \includegraphics[width=0.4\textwidth]{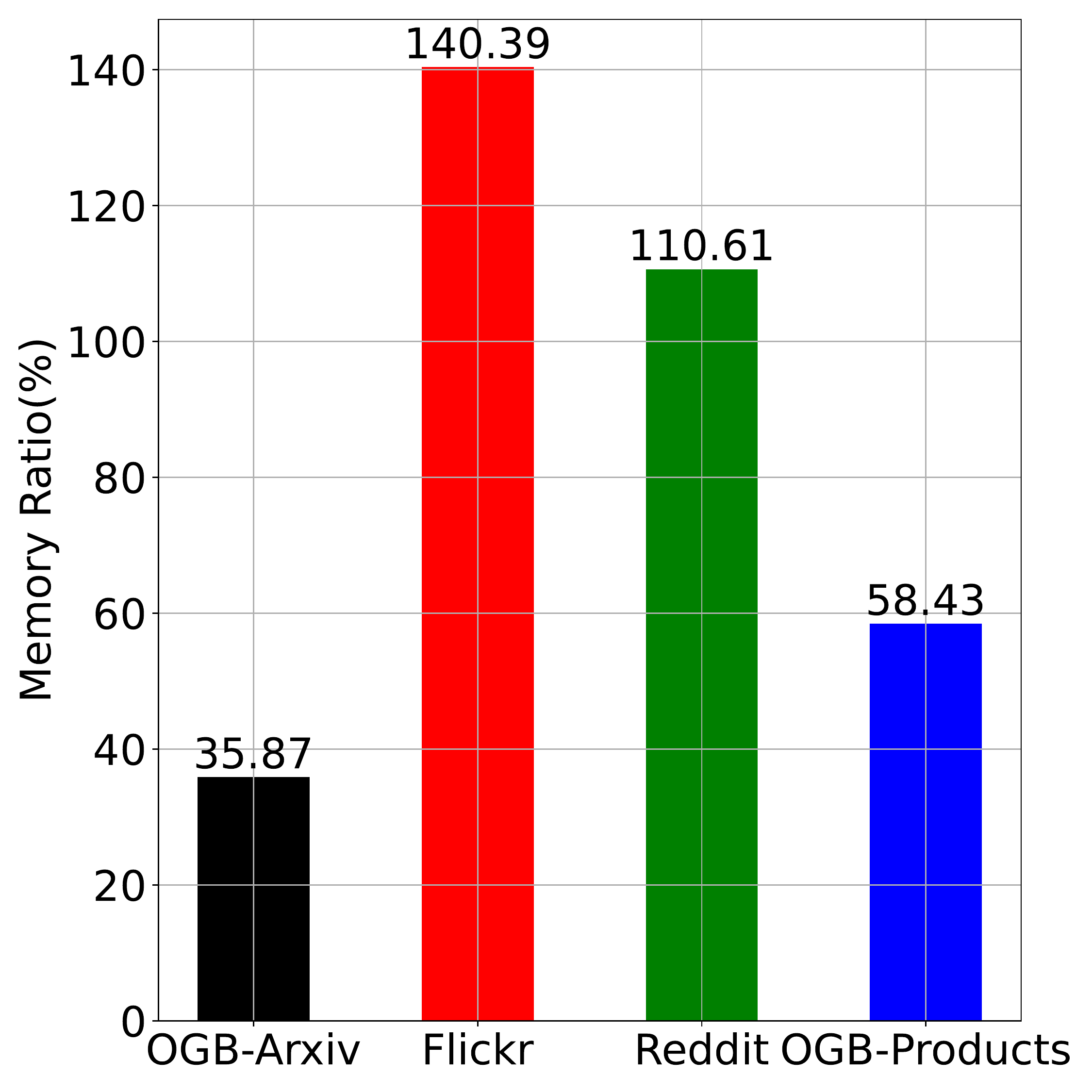}
  \end{center}
  %\vspace{-10pt}
  \caption{The average ration of the number of out-of-subgraph nodes to the number of in-subgraph nodes when training a GCN over the four datasets.}
%   \vspace{-10pt}
  \label{fig:mem}
%   \vspace{-7mm}
\end{figure}
Denser graphs like Flickr and Reddit require more  memory to store representations of off-subgraph nodes than OGB-Arxiv and OGB-products. This introduces an interesting tradeoff between extra memory storage and gained benefits in reduced communication and preservation of global graph information. We argue that modern GPU servers are equipped with ample GPU memory resources to buffer the out-of-subgraph representations~\cite{p3_osdi21,gnnadvisor_osdi21}; if indeed more memory is required, our research will benefit from the recent advancement of unified memory~\cite{huvm_atc22,swapadvisor_asplos20, capuchin_asplos20}, where DIGEST can use both the host and GPU memory more efficiently.
%in the domains of disaggregated memory storage systems~\cite{flash_eurosys16, decibel_nsdi17, farmemory_eurosys20}, where DIGEST can utilize a network-attached, high-performance far memory storage system for representation storage and retrieval. 
In the worst case, if GPU memory is limited, DIGEST can implement a multi-tier storage system that uses the limited memory as a level-one cache and the host memory as a backing store.
%\textcolor{red}{FIXIE: we cannot simply say still acceptable here; this is too weak of an argument.}
%it is still acceptable. 
For large graphs that are sparse (OGB-products), the extra memory cost can be bounded to a relatively lower ratio ($58.43\%$). 

\subsection{Theoretical Proof}

In this section, we provide the formal proof for all the theories presented in the main paper. 

\subsubsection{Proof of Theorem~\ref{thm:gradient bound}}

\begin{theorem}[Formal version of Theorem~\ref{thm:gradient bound}]
\label{thm:gradient bound formal}
Given a $L$-layer GNN $f_{\mathbf{W}}$ with $r_1$-Lipschitz smooth $\Phi$ and $r_2$-Lipschitz smooth $\Psi$. Denote $\Delta(\mathcal{G})$ as the maximal node degree for graph $\mathcal{G}$. Assume $\forall\; v\in\mathcal{V}$ and $\forall\; \ell\in\{1,2,\cdots,L-1\}$ we have $\Vert \mathbf{h}_{v}^{(\ell)} - \mathbf{\tilde{h}}_{v}^{(\ell)} \Vert \leq\epsilon^{(\ell)}$, where $\mathbf{h}_{v}^{(\ell)}$ and $\mathbf{\tilde{h}}_{v}^{(\ell)}$ denotes the node representation computed by DIGEST and the stale one, respectively. Further assume each local loss function $\mathcal{L}_{m}^{\text{Local}}$ is $\tau$-Lipschitz smooth w.r.t node representation. Then the global gradient computed by DIGEST has the following error bound 
\begin{equation}
    \big\Vert \nabla_{\mathbf{W}}\mathcal{L} - \nabla_{\mathbf{W}}\mathcal{L}^{*} \big\Vert_2 \leq \frac{\tau}{M} \sum_{\ell=1}^{L-1}\epsilon^{(\ell)}r_{1}^{L-\ell}r_{2}^{L-\ell}\sum_{m=1}^M\vert \Delta(\mathcal{G}_m)\vert^{L-\ell} ,
\end{equation}
where
\begin{equation}
    \nabla_{\mathbf{W}}\mathcal{L}\coloneqq\frac{1}{M}\sum_{m=1}^M \frac{1}{\vert\mathcal{V}_m \vert}\sum_{v\in\mathcal{V}_m}\nabla_{\mathbf{W}}\mathcal{L}_{m}^{\text{Local}}(\mathbf{h}_{v}^{(L)}),
\end{equation}
and 
\begin{equation}
    \nabla_{\mathbf{W}}\mathcal{L}^{*}\coloneqq\frac{1}{M}\sum_{m=1}^M \frac{1}{\vert\mathcal{V}_m \vert}\sum_{v\in\mathcal{V}_m}\nabla_{\mathbf{W}}\mathcal{L}_{m}^{\text{Local}}(\mathbf{h^*}_{v}^{(L)}),
\end{equation}
where $\mathbf{h^*}_{v}^{(\ell)}$ denotes the exact output from the $\ell$-th layer of GNN without any staleness.
\end{theorem}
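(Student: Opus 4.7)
The plan is to first decompose the global-gradient error into a sum over subgraphs and nodes, then convert gradient error to representation error via the assumed $\tau$-Lipschitz smoothness of each local loss, and finally control the representation error at layer $L$ by unrolling a layer-wise recursion driven by the Lipschitz constants of $\Phi$ and $\Psi$. Concretely, I would start from
\begin{equation*}
\bigl\|\nabla_{\mathbf{W}}\mathcal{L} - \nabla_{\mathbf{W}}\mathcal{L}^{*}\bigr\|_2
\;\le\; \frac{1}{M}\sum_{m=1}^{M}\frac{1}{|\mathcal{V}_m|}\sum_{v\in\mathcal{V}_m}\bigl\|\nabla_{\mathbf{W}}\mathcal{L}_m^{\text{Local}}(\mathbf{h}_v^{(L)}) - \nabla_{\mathbf{W}}\mathcal{L}_m^{\text{Local}}(\mathbf{h}_v^{*,(L)})\bigr\|_2,
\end{equation*}
and then use the $\tau$-Lipschitz smoothness of $\mathcal{L}_m^{\text{Local}}$ with respect to the node representation to replace each summand by $\tau \,\|\mathbf{h}_v^{(L)} - \mathbf{h}_v^{*,(L)}\|$. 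This reduces the problem to bounding the per-node representation error at the top layer.

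Next I would set up a recursion on $e_v^{(\ell)} \coloneqq \|\mathbf{h}_v^{(\ell)} - \mathbf{h}_v^{*,(\ell)}\|$. Plugging the DIGEST update (Eq.~\ref{eq:stale representation}) and the exact update into $\Psi$, the $r_2$-Lipschitz property of $\Psi$ bounds $e_v^{(\ell+1)}$ by $r_2$ times the sum of (i) the self-term $e_v^{(\ell)}$ and (ii) the error in the aggregated neighborhood message. The $r_1$-Lipschitz property of $\Phi$ then bounds the aggregation term by $r_1$ times the sum, over $u \in \mathcal{N}(v)$, of $\|\mathbf{h}_u^{(\ell)} - \mathbf{h}_u^{*,(\ell)}\|$ for in-subgraph neighbors and $\|\tilde{\mathbf{h}}_u^{(\ell)} - \mathbf{h}_u^{*,(\ell)}\|$ for out-of-subgraph neighbors; the latter is further split, via triangle inequality, into the staleness term (bounded by $\epsilon^{(\ell)}$ by assumption) plus a propagated error $e_u^{(\ell)}$. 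Since $|\mathcal{N}(v)| \le \Delta(\mathcal{G}_m)$ for $v \in \mathcal{V}_m$, this yields a clean recursion of the schematic form $e_v^{(\ell+1)} \le r_1 r_2 \,\Delta(\mathcal{G}_m)\bigl(\max_{u}e_u^{(\ell)} + \epsilon^{(\ell)}\bigr)$, with base case $e_v^{(0)} = 0$ since inputs are exact.

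Unrolling this recursion from $\ell=0$ up to $\ell=L$ produces the geometric stack $r_1^{L-\ell} r_2^{L-\ell}\,\Delta(\mathcal{G}_m)^{L-\ell}$ multiplying each $\epsilon^{(\ell)}$, with the summation over $\ell = 1,\ldots,L-1$ reflecting the layers at which staleness is actually injected (note that the input layer $\ell=0$ and the output layer $\ell=L$ are not stale). Plugging this back into the Lipschitz-smoothness bound on the gradient and collecting the $1/(M|\mathcal{V}_m|)$ averaging gives exactly
\begin{equation*}
\bigl\|\nabla_{\mathbf{W}}\mathcal{L} - \nabla_{\mathbf{W}}\mathcal{L}^{*}\bigr\|_2 \;\le\; \frac{\tau}{M}\sum_{\ell=1}^{L-1}\epsilon^{(\ell)}\, r_1^{L-\ell} r_2^{L-\ell}\sum_{m=1}^{M}|\Delta(\mathcal{G}_m)|^{L-\ell}.
\end{equation*}

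The main obstacle, I expect, will be the recursion step: one has to carefully separate in-subgraph contributions (which carry the previous layer's propagated error $e_u^{(\ell)}$) from out-of-subgraph contributions (which carry both an $\epsilon^{(\ell)}$ staleness term and a propagated error from a different subgraph), and then verify that the worst-case combined bound reduces to a single clean recursion parameterized only by $\Delta(\mathcal{G}_m)$ and $\epsilon^{(\ell)}$. A secondary subtlety is justifying that replacing the per-node $|\mathcal{V}_m|^{-1}\sum_v$ by a worst-case node bound is tight enough to match the claimed factor $|\Delta(\mathcal{G}_m)|^{L-\ell}$ rather than a strictly larger power; I would argue this by upper-bounding $\max_v e_v^{(\ell)}$ in the recursion and observing that this upper bound is independent of $v$, so averaging over $\mathcal{V}_m$ is harmless.
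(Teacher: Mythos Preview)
Your proposal is correct and follows essentially the same approach as the paper: triangle inequality to decompose the global gradient error over subgraphs and nodes, $\tau$-Lipschitz smoothness to pass from gradient error to representation error, and then the per-node bound $\|\mathbf{h}_v^{(L)}-\mathbf{h}_v^{*,(L)}\|\le\sum_{\ell=1}^{L-1}\epsilon^{(\ell)}r_1^{L-\ell}r_2^{L-\ell}|\mathcal{N}(v)|^{L-\ell}$ combined with $|\mathcal{N}(v)|\le\Delta(\mathcal{G}_m)$. The only difference is that the paper invokes this last representation bound directly as Theorem~2 of Fey et al.\ (GNNAutoScale,~\cite{fey2021gnnautoscale}) rather than unrolling the layer-wise recursion explicitly as you do; your derivation simply reproduces that cited result.
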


\begin{proof}
As stated in Theorem 2 in~\cite{fey2021gnnautoscale}, under the single-GPU training setup, with Lipschitz smooth $\Phi$ and $\Psi$ as well as not too stale node representations, the GNN last layer's output can be bounded by
\begin{equation}
    \left\Vert \mathbf{h}_{v}^{(L)} - \mathbf{h^*}_{v}^{(L)} \right\Vert_2 \leq \sum_{\ell=1}^{L-1}\epsilon^{(\ell)}r_{1}^{L-\ell}r_{2}^{L-\ell}\vert \mathcal{N}(v)\vert^{L-\ell}.
\end{equation}
Now consider the distributed GNN training setting. First, notice that in our distributed setting, the stale node representation $\mathbf{\tilde{h}}_{v}^{(\ell)}$ is shared for all subgraphs. In other words, for $m=1,2,\cdots,M$ we can apply the conclusion above with the Lipschitz smooth asumption and have
\begin{equation}
    \begin{split}
        \left\Vert \nabla_{\mathbf{W}}\mathcal{L}_{m}^{\text{Local}}(\mathbf{h}_{v}^{(L)}) - \nabla_{\mathbf{W}}\mathcal{L}_{m}^{\text{Local}}(\mathbf{h^*}_{v}^{(L)}) \right\Vert_2 &\leq \tau \left\Vert \mathbf{h}_{v}^{(L)} - \mathbf{h^*}_{v}^{(L)} \right\Vert_2 \\
        &\leq \tau \cdot\sum_{\ell=1}^{L-1}\epsilon^{(\ell)}r_{1}^{L-\ell}r_{2}^{L-\ell}\vert \mathcal{N}(v)\vert^{L-\ell}.
    \end{split}
\label{eq:local bound}
\end{equation}
Notice that $\vert\mathcal{N}(v)\vert \leq \Delta(\mathcal{G}_m)$, $\forall\;v\in\mathcal{V}_m$, where $\Delta(\mathcal{G}_m)$ is defined as the maximal node degree for subgraph $\mathcal{G}_m$. We can sum over all nodes $v\in \mathcal{V}_m$ and take average on both sides of Eq.~\ref{eq:local bound} to get
\begin{equation}
    \begin{split}
        \frac{1}{\vert\mathcal{V}_m \vert}\sum_{v\in\mathcal{V}_m}\left\Vert \nabla_{\mathbf{W}}\mathcal{L}_{m}^{\text{Local}}(\mathbf{h}_{v}^{(L)}) - \nabla_{\mathbf{W}}\mathcal{L}_{m}^{\text{Local}}(\mathbf{h^*}_{v}^{(L)}) \right\Vert_2  \leq \tau \cdot\sum_{\ell=1}^{L-1}\epsilon^{(\ell)}r_{1}^{L-\ell}r_{2}^{L-\ell}\vert \Delta\mathcal{G}_m\vert^{L-\ell}.
    \end{split}
\label{eq:local bound 2}
\end{equation}
Finally, since we apply average to aggregate each local subgraph's gradient to get the global gradients, by the triangle inequality, we have
\begin{equation}
\begin{split}
    &\Big\Vert \nabla_{\mathbf{W}}\mathcal{L} - \nabla_{\mathbf{W}}\mathcal{L}^{*} \Big\Vert_2 \\
    =& \left\Vert \frac{1}{M}\sum_{m=1}^M \frac{1}{\vert\mathcal{V}_m \vert}\sum_{v\in\mathcal{V}_m}\nabla_{\mathbf{W}}\mathcal{L}_{m}^{\text{Local}}(\mathbf{h}_{v}^{(L)}) - \frac{1}{M}\sum_{m=1}^M \frac{1}{\vert\mathcal{V}_m \vert}\sum_{v\in\mathcal{V}_m}\nabla_{\mathbf{W}}\mathcal{L}_{m}^{\text{Local}}(\mathbf{h^*}_{v}^{(L)}) \right\Vert_2 \\
    \leq & \frac{1}{M}\sum_{m=1}^M \left\Vert \frac{1}{\vert\mathcal{V}_m \vert}\sum_{v\in\mathcal{V}_m}\nabla_{\mathbf{W}}\mathcal{L}_{m}^{\text{Local}}(\mathbf{h}_{v}^{(L)}) -  \frac{1}{\vert\mathcal{V}_m \vert}\sum_{v\in\mathcal{V}_m}\nabla_{\mathbf{W}}\mathcal{L}_{m}^{\text{Local}}(\mathbf{h^*}_{v}^{(L)}) \right\Vert_2 \\
    \leq & \frac{1}{M}\sum_{m=1}^M \frac{1}{\vert\mathcal{V}_m \vert}\sum_{v\in\mathcal{V}_m} \left\Vert \nabla_{\mathbf{W}}\mathcal{L}_{m}^{\text{Local}}(\mathbf{h}_{v}^{(L)}) - \nabla_{\mathbf{W}}\mathcal{L}_{m}^{\text{Local}}(\mathbf{h^*}_{v}^{(L)}) \right\Vert_2 \\
    \leq & \frac{1}{M}\sum_{m=1}^M \tau \cdot\sum_{\ell=1}^{L-1}\epsilon^{(\ell)}r_{1}^{L-\ell}r_{2}^{L-\ell}\vert \Delta\mathcal{G}_m\vert^{L-\ell},
\end{split}
\end{equation}
which finishes the proof.
\end{proof}

\subsubsection{Proof of Theorem~\ref{thm:sync convergence}}

In this section, we prove the convergence of DIGEST under the synchronous setting. First, we introduce some notions, definitions and necessary assumptions.

% The general idea of our proof takes 3 steps: First, we prove that under mild assumptions the stepwise change of intermediate variables such as the node representations and the gradients are bounded by a constant, which is proportional to the learning rate $\eta$. Second, we show that the approximation error between DIGEST's local gradient and the exact local gradient without staleness can be further upper-bounded by a constant, which is also proportional to the learning rate $\eta$. Third, we aggregate the approximation error for each local gradient and prove that the error for the global gradient is finally bounded by $\eta E/\sqrt{M}$, where $M$ is the number of subgraphs and $E$ is a constant.

\textbf{Preliminaries.} We consider GCN in our proof without loss of generality. We denote the input graph as $\mathcal{G}=(\mathcal{V},\mathcal{E})$, $L$-layer GNN as $f$, feature matrix as $X$, weight matrix as $W$. The forward propagation of one layer of  GCN is
\begin{equation}
    Z^{(\ell+1)} = PH^{(\ell)}W^{(\ell)},\quad H^{(\ell+1)} = \sigma(Z^{(\ell)}),
\end{equation}
where $\ell$ is the layer index, $\sigma$ is the activation function, and $P$ is the propagation matrix following the definition of GCN~\cite{kipf2016semi}. Notice $H^{(0)}=X$. We can further define the $(\ell+1)$-th layer of  GCN as:
\begin{equation}
    f^{(\ell+1)}(H^{(\ell)},W^{(\ell)}) \coloneqq \sigma(PH^{(\ell)}W^{(\ell)})
\end{equation}

The backward propagation of GCN can be expressed as follow:
\begin{equation}
    G^{(\ell)}_{H} = \nabla_{H} f^{(\ell+1)}(H^{(\ell)},W^{(\ell)},G^{(\ell+1)}_{H}) \coloneqq P^{\intercal}D^{(\ell+1)}(W^{(\ell+1)})^{\intercal}
\end{equation}
\begin{equation}
    G^{(\ell+1)}_{W} = \nabla_{W} f^{(\ell+1)}(H^{(\ell+1)},W^{(\ell)},G^{(\ell+1)}_{H}) \coloneqq (PH^{(\ell)})^\mathcal{\intercal}D^{(\ell+1)}
\end{equation}
where
\begin{equation}
    D^{(\ell+1)} = G^{(\ell)}_{H} \circ \sigma^{\prime} (PH^{(\ell)}W^{(\ell+1)})
\end{equation}
and $\circ$ represents the Hadamard product.

Under a distributed training setting, for each subgraph $\mathcal{G}_m = (\mathcal{V}_m, \mathcal{E}_m)$, $m=1.2,\cdots,M$, the propagation matrix can be decomposed into two independent matrices, i.e. $P = P_{m,in} + P_{m,out}$, where $ P_{m,in}$ denotes the propagation matrix for nodes inside the subgraph $\mathcal{G}_m$ while $ P_{m,out}$ denotes that for neighbor nodes outside $\mathcal{G}_m$. If it will not cause confusion, we will use $P_{in}$ and $P_{out}$ in our future proof for simpler notation. 
% Our notation follows~\citep{wan2022pipegcn} in general but differs in that DIGEST considers that all subgraphs are computed in parallel instead of following a pipeline parallelism manner. 

For DIGEST, the forward propagation of a single layer of GCN can be expressed as
\begin{equation}
\begin{split}
    \tilde{Z}_{m}^{(t,\ell+1)} &= P_{in}\tilde{H}_{m}^{(t,\ell)}\tilde{W}_{m}^{(t,\ell)} + P_{out}\tilde{H}_{m}^{(t-1,\ell)}\tilde{W}_{m}^{(t,\ell)} \\
    \tilde{H}_{m}^{(t,\ell+1)} &= \sigma(\tilde{Z}_{m}^{(t,\ell)})
\end{split}
\end{equation}
where we use $\tilde{H}$ to differentiate with the counterpart without staleness, i.e., $H$ (same for other variables). $t$ is the training iteration index. Similarly, we can define each layer as a single function
\begin{equation}
    \tilde{f}_{m}^{(t,\ell+1)}(\tilde{H}_{m}^{(t,\ell)},\tilde{W}_{m}^{(t,\ell)}) \coloneqq \sigma(P_{in}\tilde{H}_{m}^{(t,\ell)}\tilde{W}_{m}^{(t,\ell)} + P_{out}\tilde{H}_{m}^{(t-1,\ell)}\tilde{W}_{m}^{(t,\ell)})
\end{equation}

Note that $\tilde{H}_{m}^{(t-1,\ell-1)}$ is not part of the input since it is the stale results from the previous iteration, i.e., it can be regarded as a constant in the current iteration.

Now we can give the definition of back-propagation in DIGEST:
\begin{equation}
\begin{split}
    \tilde{G}^{(t,\ell)}_{H,m} &= \nabla_{H} \tilde{f}_{m}^{(t,\ell+1)}(\tilde{H}_{m}^{(\ell)},\tilde{W}^{(\ell)},\tilde{G}^{(\ell+1)}_{H,m}) \\
    &\coloneqq  P_{in}^{\intercal}\tilde{D}_{m}^{(t,\ell+1)}(\tilde{W}_{m}^{(t,\ell+1)})^{\intercal} + P_{out}^{\intercal}\tilde{D}_{m}^{(t-1,\ell+1)}(\tilde{W}_{m}^{(t,\ell+1)})^{\intercal}
\end{split}
\end{equation}
\begin{equation}
\begin{split}
    \tilde{G}^{(t,\ell+1)}_{W,m} &= \nabla_{W} \tilde{f}_{m}^{(t,\ell+1)}(\tilde{H}_{m}^{(t,\ell+1)},\tilde{W}_{m}^{(t,\ell)},\tilde{G}^{(t,\ell+1)}_{H,m}) \\
    &\coloneqq (P_{in}\tilde{H}_{m}^{(t,\ell)} + P_{out}\tilde{H}_{m}^{(t-1,\ell-1)})^{\intercal}\tilde{D}_{m}^{(t,\ell+1)}
\end{split}
\end{equation}
where
\begin{equation}
    \tilde{D}_{m}^{(t,\ell+1)} = G^{(\ell)}_{H,m} \circ \sigma^{\prime} (P_{in}\tilde{H}_{m}^{(t,\ell)}\tilde{W}_{m}^{(t,\ell)} + P_{out}\tilde{H}_{m}^{(t-1,\ell-1)}\tilde{W}_{m}^{(t,\ell)})
\end{equation}

In our proof, we use $\mathcal{L}(W^{(t)})$ to denote the global loss with GCN parameter $W$ after $t$ iterations, and use $\mathcal{\tilde{L}}_{m}(W_{m}^{(t)})$ to denotes the local loss for the $m$-th subgraph with model parameter $W_{m}^{(t)}$ after $t$ iterations computed by DIGEST.

\textbf{Assumptions.} Here we introduce some assumptions about the GCN model and the original input graph. These assumptions are standard ones that are also used in~\cite{chen2018stochastic,cong2021importance,wan2022pipegcn}.

\begin{assumption}
The loss function $Loss(\cdot,\cdot)$ is $C_{Loss}$-Lipchitz continuous and $L_{Loss}$-Lipschitz smooth with respect to the last layer's node representation, i.e.,
\begin{equation}
    \vert Loss(\mathbf{h}^{(L)}_v,\mathbf{y}_v) - Loss(\mathbf{h}^{(L)}_w,\mathbf{y}_v) \vert \leq C_{Loss}\Vert \mathbf{h}^{(L)}_v - \mathbf{h}^{(L)}_w \Vert_2
\end{equation}
and
\begin{equation}
    \Vert \nabla Loss(\mathbf{h}^{(L)}_v,\mathbf{y}_v) - \nabla Loss(\mathbf{h}^{(L)}_w,\mathbf{y}_v) \Vert_2 \leq L_{Loss}\Vert \mathbf{h}^{(L)}_v - \mathbf{h}^{(L)}_w \Vert_2
\end{equation}
\label{ass:lipschitz of loss}
\end{assumption}

\begin{assumption}
The activation function $\sigma(\cdot)$ is $C_{\sigma}$-Lipchitz continuous and $L_{\sigma}$-Lipschitz smooth, i.e.
\begin{equation}
    \Vert  \sigma(Z^{(\ell)}_1) - \sigma(Z^{(\ell)}_2) \Vert_2 \leq C_{\sigma}\Vert (Z^{(\ell)}_1 - Z^{(\ell)}_2 \Vert_2 \quad \text{and} \quad \Vert  \sigma^{\prime}(Z^{(\ell)}_1) - \sigma^{\prime}(Z^{(\ell)}_2) \Vert_2 \leq L_{\sigma}\Vert (Z^{(\ell)}_1 - Z^{(\ell)}_2 \Vert_2
\end{equation}
\label{ass:lipschitz of activation}
\end{assumption}

\begin{assumption}
$\forall\;\ell$ that $\ell=1,2,\cdots,L$, we have
\begin{equation}
    \Vert W^{(\ell)}\Vert_{F} \leq K_{W},\; \Vert P^{(\ell)}\Vert_{P} \leq K_{W},\; \Vert X^{(\ell)}\Vert_{F} \leq K_{X}.
\end{equation}
\end{assumption}

Now we can introduce the proof of our Theorem~\ref{thm:sync convergence}. We consider a GCN with $L$ layers that is $L_f$-Lipschitz smooth, i.e., $\Vert\nabla\mathcal{L}(W_1) - \nabla\mathcal{L}(W_2)\Vert_2 \leq L_f \Vert W_1 - W_2 \Vert_2$.

\begin{theorem}[Formal version of Theorem~\ref{thm:sync convergence}]
There exists a constant $E$ such that for any arbitrarily small constant $\epsilon > 0$, we can choose a learning rate $\eta=\frac{\sqrt{M\epsilon}}{E}$ and number of training iterations $T=(\mathcal{L}({W}^{(1)})-\mathcal{L}({W}^{*}))\frac{E}{\sqrt{M}}\epsilon^{-\frac{3}{2}}$, such that
\begin{equation}
\vspace{-2mm}
    \frac{1}{T}\sum_{t=1}^T \Vert \nabla \mathcal{L}({W}^{(t)}) \Vert^{2} \leq \mathcal{O}(\frac{1}{T^{\frac{2}{3}}M^{\frac{1}{3}}})
\end{equation}
where ${W}^{(t)}$and ${W}^{*}$ denotes the parameters at iteration $t$ and the optimal one, respectively.
\label{thm:convergence_copy}
\end{theorem}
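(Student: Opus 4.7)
The plan is to follow the standard descent-lemma route for non-convex SGD, but with two modifications tailored to DIGEST: (i) the gradient driving the update is biased by staleness, with bias magnitude governed by Theorem \ref{thm:gradient bound}, and (ii) averaging across $M$ subgraphs reduces the variance by a factor of $1/M$. First I would write the $L_f$-smoothness descent lemma for $\mathcal{L}$ along the update $W^{(t+1)} = W^{(t)} - \eta\, \tilde{g}^{(t)}$, where $\tilde{g}^{(t)}$ is the DIGEST approximate gradient obtained by averaging the local stale gradients over the $M$ subgraphs:
\begin{equation*}
\mathcal{L}(W^{(t+1)}) \leq \mathcal{L}(W^{(t)}) - \eta \langle \nabla \mathcal{L}(W^{(t)}), \tilde{g}^{(t)}\rangle + \tfrac{L_f \eta^2}{2}\|\tilde{g}^{(t)}\|^2 .
\end{equation*}
Then I would split $\tilde{g}^{(t)} = \nabla \mathcal{L}(W^{(t)}) + \xi^{(t)} + b^{(t)}$, where $\xi^{(t)}$ is the mean-zero stochastic fluctuation from subgraph averaging (with variance $\le \sigma^2/M$ under the bounded-variance assumption implied by Assumptions 1--3) and $b^{(t)}$ is the deterministic staleness bias that Theorem \ref{thm:gradient bound} controls.

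Next I would quantify the staleness bias. The representation error $\epsilon^{(\ell)} = \|\mathbf{h}^{(\ell)}-\tilde{\mathbf{h}}^{(\ell)}\|$ is generated by parameter drift between when the stale representation was pushed and when it is pulled; under the uniform bound $\|W_{m}^{(t)}-W_{m}^{(t-\tau)}\|\le \eta \tau G$ (with $G$ the bound on $\tilde{g}$ implied by Assumptions \ref{ass:lipschitz of loss}--3), combined with the Lipschitz properties of $\Phi$, $\Psi$, and $\sigma$, one gets $\epsilon^{(\ell)} \le C_\ell \,\eta$ for constants $C_\ell$ depending on the degrees $\Delta(\mathcal{G}_m)$, the layer depth $L$, and the Lipschitz constants. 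Plugging this into Theorem \ref{thm:gradient bound} yields $\|b^{(t)}\|_2 \le B\eta$ for some $B=B(L,\Delta,r_1,r_2,\tau)$ independent of $t$. This is the main technical step and I expect it to be the hardest obstacle, because one must show the staleness does not compound uncontrollably across iterations; a careful induction on $t$ (or a direct bound using the periodic synchronization interval $N$) is needed to keep $\epsilon^{(\ell)}$ of order $\eta$ rather than $\eta \cdot T$.

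Taking expectations and using Cauchy--Schwarz on the bias/true-gradient cross-term together with $\mathbb{E}\|\tilde g^{(t)}\|^2 \le 2\|\nabla\mathcal{L}(W^{(t)})\|^2 + 2(\sigma^2/M + B^2\eta^2)$, the descent inequality becomes
\begin{equation*}
\mathbb{E}[\mathcal{L}(W^{(t+1)})] \leq \mathbb{E}[\mathcal{L}(W^{(t)})] - \tfrac{\eta}{2}\mathbb{E}\|\nabla \mathcal{L}(W^{(t)})\|^2 + \tfrac{\eta}{2}B^2\eta^2 + L_f\eta^2\!\left(\tfrac{\sigma^2}{M}+B^2\eta^2\right),
\end{equation*}
provided $\eta \le 1/(2L_f)$. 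Telescoping from $t=1$ to $T$, dividing by $\eta T/2$, and collecting constants into a single $E = E(L_f, B, \sigma, L, \Delta, r_1, r_2, \tau)$ yields
\begin{equation*}
\frac{1}{T}\sum_{t=1}^{T}\mathbb{E}\|\nabla \mathcal{L}(W^{(t)})\|^{2}_{2} \;\le\; \frac{2\bigl(\mathcal{L}(W^{(1)})-\mathcal{L}(W^{*})\bigr)}{\eta T} + E\!\left(\frac{\eta}{M} + \eta^2\right).
\end{equation*}

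Finally I would tune $\eta$ to balance the dominant terms. Setting $\eta = \sqrt{M\epsilon}/E$ makes the variance term $\eta/M = \sqrt{\epsilon/M}/E$ of the same order as the target $\epsilon$-neighborhood after absorbing constants, and the bias term $\eta^2 = M\epsilon/E^2$ is of lower order once we pick $T = (\mathcal{L}(W^{(1)})-\mathcal{L}(W^{*}))\,E\,\epsilon^{-3/2}/\sqrt{M}$, which makes the optimization term $2(\mathcal{L}(W^{(1)})-\mathcal{L}(W^*))/(\eta T)$ also $O(\epsilon)$. Re-expressing $\epsilon$ in terms of $T$ and $M$ using this choice gives $\epsilon = \Theta(T^{-2/3}M^{-1/3})$, which is the claimed rate. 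The crux of the argument is the $\eta$-scaling of the staleness bias in the second paragraph; once that is in hand, the rest is the standard optimize-the-learning-rate calculation for non-convex SGD.
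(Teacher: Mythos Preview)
Your overall scaffold (descent lemma, decompose the DIGEST gradient into ``true gradient plus error'', show the error is $O(\eta)$ because the staleness is generated by one step of parameter drift, telescope, tune $\eta$) is the same as the paper's. But the way you obtain the $1/M$ factor is where your argument diverges from the paper, and it does not go through.

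The decomposition $\tilde g^{(t)}=\nabla\mathcal{L}(W^{(t)})+\xi^{(t)}+b^{(t)}$ with a mean-zero stochastic term $\xi^{(t)}$ of variance $\sigma^2/M$ is not appropriate here: in DIGEST \emph{all} $M$ subgraphs participate every round, the partition is fixed, and there is no sampling. The paper's proof is entirely deterministic. It uses the exact identity $\nabla\mathcal{L}(W^{(t)})=\tfrac{1}{M}\sum_{m}\nabla\mathcal{L}_m(W_m^{(t)})$ and writes $\tilde g^{(t)}=\nabla\mathcal{L}(W^{(t)})+\tfrac{1}{M}\sum_m\delta_m^{(t)}$ with $\delta_m^{(t)}=\nabla\tilde{\mathcal{L}}_m(W_m^{(t)})-\nabla\mathcal{L}_m(W_m^{(t)})$. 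There is no variance term at all; the $1/M$ in the final rate comes from bounding the \emph{deterministic} quantity $\bigl\|\tfrac{1}{M}\sum_m\delta_m^{(t)}\bigr\|^2$ directly (the paper invokes Corollary~A.10 of \cite{wan2022pipegcn} together with Assumptions~1--3 to get this of order $\eta/M$, leading to the key inequality $\tfrac{1}{T}\sum_t\|\nabla\mathcal{L}(W^{(t)})\|^2\le \tfrac{2}{\eta T}(\mathcal{L}(W^{(1)})-\mathcal{L}(W^*))+\eta^2E^2/M$).

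Because your error structure is wrong, your tuning also fails: with $\eta=\sqrt{M\epsilon}/E$ your ``bias'' term $\eta^2=M\epsilon/E^2$ is \emph{larger} than $\epsilon$ by a factor of $M$, not lower order, and your ``variance'' term $\eta/M=\sqrt{\epsilon/M}/E$ is $\Theta(\sqrt{\epsilon})$, not $\Theta(\epsilon)$. So neither side of your inequality balances to $\epsilon$, and you do not actually recover $T^{-2/3}M^{-1/3}$ from your bound. To match the paper you should drop $\xi^{(t)}$ entirely, keep only the deterministic staleness error $\tfrac{1}{M}\sum_m\delta_m^{(t)}$, and argue (as the paper does by citing \cite{wan2022pipegcn}) that its squared norm is $O(\eta/M)$; your observation that staleness is driven by one step of parameter drift and hence scales with $\eta$ is exactly the right ingredient for that step.
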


\begin{proof}
Beginning from the assumption of smoothness of loss function, 
\begin{equation}
\begin{split}
    \mathcal{L}(W^{t+1}) \leq \mathcal{L}(W^{t}) + \left\langle \nabla\mathcal{L}(W^{t}), W^{(t+1)} - W^{(t)} \right\rangle + \frac{L_f}{2}\Vert  W^{(t+1)} - W^{(t)} \Vert_{2}^2 
\end{split}
\end{equation}
Recall that the update rule of DIGEST is
\begin{equation}
    W^{(t+1)} = W^{(t)} - \frac{\eta}{M} \sum_{m=1}^M \nabla \mathcal{\tilde{L}}_{m}(W_{m}^{(t)})
\end{equation}
so we have
\begin{equation}
\begin{split}
     & \mathcal{L}(W^{t}) + \left\langle \nabla\mathcal{L}(W^{t}), W^{(t+1)} - W^{(t)} \right\rangle + \frac{L_f}{2}\Vert  W^{(t+1)} - W^{(t)} \Vert_{2}^2  \\
    =& \mathcal{L}(W^{t}) - \eta\left\langle \nabla\mathcal{L}(W^{t}),\frac{1}{M} \sum_{m=1}^M \nabla \mathcal{\tilde{L}}_{m}(W_{m}^{(t)}) \right\rangle + \frac{\eta^{2}L_f}{2}\left\Vert  \frac{1}{M} \sum_{m=1}^M \nabla \mathcal{\tilde{L}}_{m}(W_{m}^{(t)}) \right\Vert_{2}^2
\end{split}
\end{equation}

Denote $\delta^{(t)}_{m} = \nabla\mathcal{\tilde{L}}_{m}(W_{m}^{(t)}) - \nabla\mathcal{L}_{m}(W_{m}^{(t)})$, we have
\begin{equation}
\begin{split}
\mathcal{L}(W^{t+1}) \leq & \mathcal{L}(W^{t}) - \eta\left\langle \nabla\mathcal{L}(W^{t}),\frac{1}{M} \sum_{m=1}^M \left(\nabla\mathcal{L}_{m}(W_{m}^{(t)}) + \delta^{(t)}_{m}\right) \right\rangle \\
&+ \frac{\eta^{2}L_f}{2} \left\Vert\frac{1}{M} \sum_{m=1}^M \left(\nabla\mathcal{L}_{m}(W_{m}^{(t)}) + \delta^{(t)}_{m}\right) \right\Vert_{2}^2 
\end{split}
\label{eq:tmp}
\end{equation}
Without loss of generality, assume the original graph can be divided evenly into M subgraphs and denote $N=\vert\mathcal{V}\vert$ as the original graph size, i.e., $N=M\cdot S$, where $S$ is each subgraph size. Notice that
\begin{equation}
\begin{split}
    \nabla\mathcal{L}(W^{t}) &= \frac{1}{N}\sum_{i=1}^{N}\nabla Loss(f_{i}^{(L)},y_i) = \frac{1}{M}\Big\{ \sum_{m=1}^{M} \frac{1}{S}\sum_{i=1}^{S} \nabla Loss(f_{m,i}^{(L)},y_{m,i}) \Big\}
\end{split}
\end{equation}
which is essentially 
\begin{equation}
    \nabla\mathcal{L}(W^{t}) = \frac{1}{M}\sum_{m=1}^{M} \nabla\mathcal{L}_{m}(W_{m}^{(t)})
\end{equation}

Plugging the equation above into Eq.~\ref{eq:tmp}, we have
\begin{equation}
    \mathcal{L}(W^{t+1}) \leq \mathcal{L}(W^{t}) - \frac{\eta}{2} \Vert \nabla\mathcal{L}(W^{t}) \Vert_{2}^{2} + \frac{\eta^{2}L_f}{2} \Big\Vert \frac{1}{M}\sum_{m=1}^{M}\delta_{m}^{(t)} \Big\Vert_{2}^{2} 
\end{equation}
which after rearranging the terms leads to
\begin{equation}
    \Vert \nabla\mathcal{L}(W^{t}) \Vert_{2}^{2} \leq \frac{2}{\eta}(\mathcal{L}(W^{t}) - \mathcal{L}(W^{t+1})) + \eta L_f \Big\Vert \frac{1}{M}\sum_{m=1}^{M}\delta_{m}^{(t)} \Big\Vert_{2}^{2} 
\end{equation}
By taking $\eta < 1/L_f$, using the three assumptions defined earlier and Corollary A.10 in~\cite{wan2022pipegcn}, and summing up the inequality above over all iterations, i.e., $t=1,2,\cdots,T$, we have
\begin{equation}
\begin{split}
    \frac{1}{T}\sum_{t=1}^T \Vert \nabla \mathcal{L}({W}^{(t)}) \Vert^{2} &\leq \frac{2}{\eta T}\big(\mathcal{L}(W^{1}) - \mathcal{L}(W^{T+1})\big) + \frac{\eta^{2}E^{2}}{M} \\
    &\leq \frac{2}{\eta T}\big(\mathcal{L}(W^{1}) - \mathcal{L}(W^{*})\big) + \frac{\eta^{2}E^{2}}{M}
\end{split}
\end{equation}
where $W^{*}$ denotes the minima of the loss function and $E$ is a constant depends on $E^{\prime}$.

Finally, taking $\eta = \frac{\sqrt{M\epsilon}}{E}$ and $T=(\mathcal{L}({W}^{(1)})-\mathcal{L}({W}^{*}))\frac{E}{\sqrt{M}}\epsilon^{-\frac{3}{2}}$ finishes the proof.

\end{proof}

\subsubsection{Proof of Theorem~\ref{thm:asyn convergence}}

By following~\cite{li2020federated,chen2020asynchronous,chai2021fedat} we make the assumption as below:

\begin{assumption}
$\forall\; m \in [M]$, $\Vert \nabla\tilde{\mathcal{L}}_{m}({W}) \Vert_2 \leq V \cdot \Vert \nabla\mathcal{L}({W}) \Vert_2$, and $\big\langle \nabla\mathcal{L}({W}), \nabla\tilde{\mathcal{L}}_{m}({W}) \big\rangle \geq \beta \cdot \Vert \nabla\mathcal{L}({W}) \Vert_{2}^{2}$, where $V$ and $\beta$ are positive real numbers, i.e., $V, \beta\in\mathbb{R}^{+}$.
\label{ass:v and beta}
\end{assumption}

We naturally assume that each local copy of the global GCN model is also $L_f$-Lipschitz smooth, i.e., $\Vert\nabla\tilde{\mathcal{L}}_{m}(W_1) - \nabla\tilde{\mathcal{L}}_{m}(W_2)\Vert_2 \leq L_f \Vert W_1 - W_2 \Vert_2$, $\forall\;m=1,2,\cdots,M$. 

Now we can give the proof of Theorem~\ref{thm:asyn convergence}.

\begin{theorem}[Formal version of Theorem~\ref{thm:asyn convergence}]
Assume the global model $\mathcal{L}(W)$ is $C_{f}$-Lipschitz continuous and the delay is bounded, i.e., $\tau < K$. Further, assume the constants defined in Assumption~\ref{ass:v and beta} satisfy $\beta - \frac{V^2}{2} > 0$. Then, after $T$ global iterations on the server, asynchronous DIGEST converges to the optimal parameter $W^*$ by
\begin{equation}
\frac{1}{T}\sum_{t=1}^T \Vert \nabla \mathcal{L}(W^{(t)}) \Vert^{2}_{2} \leq \frac{1}{\eta TB}\Big(\mathcal{L}(W^{(1)})-\mathcal{L}(W^{(*)})\Big) + \frac{P(\eta)}{B}, 
\end{equation}
where $B=\beta-\frac{V^2}{2}$ and $P(\eta)=\frac{1}{2}\eta^2 K^2 C_{f}^2 L_{f}^2+(1+V)\eta KC_{f}^2 L_f$.
\end{theorem}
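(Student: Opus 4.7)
The plan is to begin from the $L_f$-Lipschitz smoothness descent inequality for the global objective $\mathcal{L}$, applied to one asynchronous step $W^{(t+1)} = W^{(t)} - \eta\,\nabla\tilde{\mathcal{L}}_{m_t}(W^{(t-\tau_t)})$ with delay $\tau_t \leq K$:
$$\mathcal{L}(W^{(t+1)}) \leq \mathcal{L}(W^{(t)}) - \eta\,\langle \nabla\mathcal{L}(W^{(t)}),\, \nabla\tilde{\mathcal{L}}_{m_t}(W^{(t-\tau_t)})\rangle + \tfrac{\eta^{2}L_f}{2}\,\Vert \nabla\tilde{\mathcal{L}}_{m_t}(W^{(t-\tau_t)})\Vert_{2}^{2}.$$
The goal is to lower-bound the linear term by a positive multiple of $\Vert\nabla\mathcal{L}(W^{(t)})\Vert_2^{2}$ minus a controllable staleness error and to upper-bound the quadratic term in the same fashion, so that summing over $t$ and telescoping yields the stated inequality.

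First I would split the delayed gradient as $\nabla\tilde{\mathcal{L}}_{m_t}(W^{(t-\tau_t)}) = \nabla\tilde{\mathcal{L}}_{m_t}(W^{(t)}) + \Delta_t$, where $\Delta_t$ is the staleness perturbation. Assumption~\ref{ass:v and beta} then gives $\langle \nabla\mathcal{L}(W^{(t)}), \nabla\tilde{\mathcal{L}}_{m_t}(W^{(t)})\rangle \geq \beta\Vert\nabla\mathcal{L}(W^{(t)})\Vert_{2}^{2}$, while Cauchy--Schwarz combined with the $L_f$-Lipschitz smoothness of $\tilde{\mathcal{L}}_{m_t}$ gives $\Vert\Delta_t\Vert \leq L_f\Vert W^{(t)} - W^{(t-\tau_t)}\Vert$. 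The crucial delay estimate is obtained by telescoping the updates,
$$\Vert W^{(t)} - W^{(t-\tau_t)}\Vert \leq \eta \sum_{s=t-\tau_t}^{t-1}\Vert\nabla\tilde{\mathcal{L}}_{m_s}(W^{(s-\tau_s)})\Vert \leq \eta K V C_f,$$
where the last step uses the delay cap $\tau_t\leq K$, Assumption~\ref{ass:v and beta} in the form $\Vert\nabla\tilde{\mathcal{L}}_m(W)\Vert \leq V\Vert\nabla\mathcal{L}(W)\Vert$, and the bound $\Vert\nabla\mathcal{L}(W)\Vert \leq C_f$ implied by the $C_f$-Lipschitz continuity of $\mathcal{L}$.

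Next, the quadratic term is expanded via $\Vert\nabla\tilde{\mathcal{L}}_{m_t}(W^{(t-\tau_t)})\Vert_{2}^{2} \leq 2\Vert\nabla\tilde{\mathcal{L}}_{m_t}(W^{(t)})\Vert_{2}^{2} + 2\Vert\Delta_t\Vert_{2}^{2}$, both summands being controlled by the same delay/Lipschitz machinery, while the cross term $\eta\,\langle\nabla\mathcal{L}(W^{(t)}),\Delta_t\rangle$ is handled by Young's inequality $|\langle a,b\rangle| \leq \tfrac{1}{2}\Vert a\Vert^{2} + \tfrac{1}{2}\Vert b\Vert^{2}$, which produces a $\tfrac{V^{2}}{2}\Vert\nabla\mathcal{L}(W^{(t)})\Vert_{2}^{2}$ contribution that, together with $-\eta\beta\Vert\nabla\mathcal{L}(W^{(t)})\Vert_{2}^{2}$, assembles into $-\eta B\Vert\nabla\mathcal{L}(W^{(t)})\Vert_{2}^{2}$ with $B = \beta - V^{2}/2 > 0$. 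Collecting every remaining staleness term into the polynomial $P(\eta)$ gives a one-step inequality
$$\mathcal{L}(W^{(t+1)}) - \mathcal{L}(W^{(t)}) \leq -\eta B\,\Vert\nabla\mathcal{L}(W^{(t)})\Vert_{2}^{2} + \eta P(\eta).$$

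Finally, I would sum from $t=1$ to $T$, telescope the left-hand side, use $\mathcal{L}(W^{(T+1)}) \geq \mathcal{L}(W^{*})$, and divide by $\eta T B$ to obtain the claimed bound. The main obstacle is not conceptual but bookkeeping: the analyst must choose the Young's-inequality split so that exactly a $V^{2}/2$ factor is peeled off the staleness cross term, and track which of the $K$, $V$, $C_{f}$, $L_{f}$ factors land in the $\eta^{2}$-coefficient versus the $\eta$-coefficient of $P(\eta)$, so that the final polynomial reproduces $\tfrac{1}{2}\eta^{2}K^{2}C_{f}^{2}L_{f}^{2} + (1+V)\eta K C_{f}^{2}L_{f}$ exactly. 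Allowing $\tau_{t}$ to vary with $t$ under the uniform cap $K$ is handled transparently inside the telescoping delay estimate, and the hypothesis $\beta - V^{2}/2 > 0$ is precisely the condition needed to guarantee a genuine descent in expectation of the norm squared.
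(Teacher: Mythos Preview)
Your overall strategy---descent lemma, split the delayed local gradient into the current-iterate part plus a staleness perturbation, bound the perturbation via the $L_f$-smoothness of $\tilde{\mathcal{L}}_m$ and the telescoped step lengths, then sum and telescope---is exactly the paper's approach. The difference is in where the crucial $\tfrac{V^2}{2}$ coefficient comes from, and on that point your write-up is wrong.

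You claim the cross term $-\eta\langle\nabla\mathcal{L}(W^{(t)}),\Delta_t\rangle$ handled by Young's inequality $|\langle a,b\rangle|\le \tfrac12\|a\|^2+\tfrac12\|b\|^2$ ``produces a $\tfrac{V^2}{2}\|\nabla\mathcal{L}(W^{(t)})\|_2^2$ contribution.'' It does not: with $a=\nabla\mathcal{L}(W^{(t)})$ and $b=\Delta_t$ you get $\tfrac12\|\nabla\mathcal{L}(W^{(t)})\|_2^2$, and no $V$ appears because $\Delta_t$ has nothing to do with Assumption~\ref{ass:v and beta}. In the paper the $\tfrac{V^2}{2}$ arises from the \emph{quadratic} term: one expands $\|\nabla\tilde{\mathcal{L}}_m(W^{(t)})+r_m\|_2^2$ exactly (not via $2\|a\|^2+2\|b\|^2$), bounds $\tfrac{\eta^2 L_f}{2}\|\nabla\tilde{\mathcal{L}}_m(W^{(t)})\|_2^2 \le \tfrac{\eta^2 L_f V^2}{2}\|\nabla\mathcal{L}(W^{(t)})\|_2^2$ using Assumption~\ref{ass:v and beta}, and then invokes $\eta\le 1/L_f$ to replace $\eta^2 L_f$ by $\eta$. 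That is what pairs with $-\eta\beta\|\nabla\mathcal{L}(W^{(t)})\|_2^2$ to give $-\eta B$. The two cross terms (the linear one $-\eta\langle\nabla\mathcal{L},r_m\rangle$ and the quadratic one $\eta^2 L_f\langle\nabla\tilde{\mathcal{L}}_m,r_m\rangle$) are in the paper bounded together by Cauchy--Schwarz and the triangle inequality, yielding the $(1+V)\eta K C_f^2 L_f$ piece of $P(\eta)$; no Young split is used there. If you rewire your bookkeeping accordingly---exact expansion of the square, $V^2/2$ from the quadratic term plus $\eta\le 1/L_f$, and Cauchy--Schwarz on the cross terms---your proposal becomes the paper's proof verbatim.
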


\begin{proof}
By the smoothness assumption of global model,
\begin{equation}
    \begin{split}
    \mathcal{L}(W^{(t+1)}) \leq \mathcal{L}(W^{(t)}) + \left\langle \nabla\mathcal{L}(W^{(t)}), W^{(t+1)} - W^{(t)} \right\rangle + \frac{L_f}{2}\Vert  W^{(t+1)} - W^{(t)} \Vert_{2}^2 .
\end{split}
\end{equation}

Suppose at global iteration $t+1$, the server receives an update from subgraph $m$, where $m$ could be any value from 1 up to $M$. Then,
\begin{equation}
    \mathcal{L}(W^{(t+1)}) \leq \mathcal{L}(W^{(t)}) - \eta\left\langle \nabla\mathcal{L}(W^{(t)}), \nabla\tilde{\mathcal{L}}_{m}({W^{(t-\tau)}}) \right\rangle + \frac{\eta^2 L_f}{2}\Vert \nabla\tilde{\mathcal{L}}_{m}({W^{(t-\tau)}}) \Vert_{2}^2,
\end{equation}
where $\tau$ is the delay for subgraph $m$ when sending server its update.

Denote $r_m \coloneqq \nabla\tilde{\mathcal{L}}_{m}({W^{(t-\tau)}}) - \nabla\tilde{\mathcal{L}}_{m}({W^{(t)}})$. Since $\tau \leq K$, by the Lipschitz smoothness of $\tilde{\mathcal{L}}_m$,
\begin{equation}
\begin{split}
    \Vert r_m \Vert_2 &= \Vert \nabla\tilde{\mathcal{L}}_{m}({W^{(t-\tau)}}) - \nabla\tilde{\mathcal{L}}_{m}({W^{(t)}}) \Vert_2 \\
    &\leq L_f \cdot \Vert {W^{(t-\tau)}} - W^{(t)} \Vert_2 \\
    &= L_f \cdot \Vert \sum_{i=t-\tau}^{t} \eta g_i \Vert_2 \\
    &\leq \eta K L_f C_f,
\end{split}
\end{equation}
where $g_i$ is the gradient or update the global server receives at iteration $i$.

Therefore,
\begin{equation}
\begin{split}
    &\mathcal{L}(W^{(t+1)}) - \mathcal{L}(W^{(t)}) \\
    \leq & -\eta \left\langle \nabla\mathcal{L}(W^{(t)}), \nabla\tilde{\mathcal{L}}_{m}({W^{(t)}}) + r_m \right\rangle + \frac{\eta^2 L_f}{2}\Vert \nabla\tilde{\mathcal{L}}_{m}({W^{(t)}}) + r_m \Vert_{2}^2 \\
    =& \underbrace{-\eta \left\langle \nabla\mathcal{L}(W^{(t)}), \nabla\tilde{\mathcal{L}}_{m}({W^{(t)}}) \right\rangle}_{\text{\RomanNumeralCaps{1}}} + \underbrace{\frac{\eta^2 L_f}{2}\Vert \nabla\tilde{\mathcal{L}}_{m}({W^{(t)}})\Vert_{2}^2}_{\text{\RomanNumeralCaps{2}}} + \underbrace{\frac{\eta^2 L_f}{2}\Vert  r_m \Vert_{2}^2}_{\text{\RomanNumeralCaps{3}}} \\
    &\underbrace{-\eta \left\langle \nabla\mathcal{L}(W^{(t)}),r_m \right\rangle}_{\text{\RomanNumeralCaps{4}}} + \underbrace{\eta^2 L_f \left\langle \nabla\tilde{\mathcal{L}}_{m}({W^{(t)}}), r_m \right\rangle}_{\text{\RomanNumeralCaps{5}}} 
\end{split}
\end{equation}

Now we want to find bounds for ({\RomanNumeralCaps{1}} - {\RomanNumeralCaps{5}}) above.

By Assumption~\ref{ass:v and beta}, we have 
\begin{equation}
    \text{(\RomanNumeralCaps{1})} \leq -\eta\beta \Vert \nabla \mathcal{L}(\mathbf{W}^{(t)}) \Vert^{2}_{2},
\end{equation}
and
\begin{equation}
    \text{(\RomanNumeralCaps{2})} \leq \frac{1}{2}\eta^2 L_f V^2 \Vert \nabla \mathcal{L}(\mathbf{W}^{(t)}) \Vert^{2}_{2}.
\end{equation}

By our previous result on $\Vert r_m \Vert_2$, we have 
\begin{equation}
    \text{(\RomanNumeralCaps{3})} \leq \frac{1}{2}\eta^3 K^2 L_{f}^2 C_{f}^2 
\end{equation}

Taking $\eta \leq 1/L_f$, we have
\begin{equation}
    \text{(\RomanNumeralCaps{4})} + \text{(\RomanNumeralCaps{5})} \leq \eta \left\langle \nabla\tilde{\mathcal{L}}_{m}({W^{(t)}}) - \nabla\mathcal{L}(W^{(t)}) , r_m \right\rangle
\end{equation}
By Cauchy-Schwartz inequality, triangle inequality and Assumption~\ref{ass:v and beta},
\begin{equation}
\begin{split}
   \text{(\RomanNumeralCaps{4})} + \text{(\RomanNumeralCaps{5})} &\leq \eta \Vert \nabla\tilde{\mathcal{L}}_{m}({W^{(t)}}) - \nabla\mathcal{L}(W^{(t)}) \Vert_{2} \cdot \Vert r_m \Vert_{2} \\
   &\leq \eta^2 K C_f L_f \cdot \Vert \nabla\tilde{\mathcal{L}}_{m}({W^{(t)}}) - \nabla\mathcal{L}(W^{(t)}) \Vert_{2} \\
   &\leq \eta^2 K C_f L_f \cdot \left( \Vert \nabla\tilde{\mathcal{L}}_{m}({W^{(t)}}) \Vert_{2} + \Vert \nabla\mathcal{L}(W^{(t)}) \Vert_{2}\right) \\
   &\leq (1+V)\eta^2 K C_f L_f \cdot \Vert \nabla\mathcal{L}(W^{(t)}) \Vert_{2} \\
   &\leq (1+V)\eta^2 K C_{f}^2 L_f 
\end{split}
\end{equation}

Put everything together, we have
\begin{equation}
    \mathcal{L}(W^{(t+1)}) - \mathcal{L}(W^{(t)}) \leq \left( \frac{1}{2}\eta V^2 - \eta\beta \right)\cdot \Vert \nabla\mathcal{L}(W^{(t)}) \Vert_{2}^{2} + \frac{1}{2}\eta^3 K^2 L_{f}^2 C_{f}^2 + (1+V)\eta^2 K C_{f}^2 L_f.
\end{equation}

Hence,
\begin{equation}
\begin{split}
    \Vert \nabla\mathcal{L}(W^{(t)}) \Vert_{2}^{2} &\leq \left( \eta\beta - \frac{1}{2}\eta V^2 \right)^{-1}\cdot \left( \mathcal{L}(W^{(t)}) - \mathcal{L}(W^{(t+1)}) \right) \\
    &+ \left( \eta\beta - \frac{1}{2}\eta V^2 \right)^{-1} \cdot \left(\frac{1}{2}\eta^3 K^2 L_{f}^2 C_{f}^2 + (1+V)\eta^2 K C_{f}^2 L_f \right).
\end{split}
\end{equation}

Summing up from $t=1$ to $T$ and taking the average,
\begin{equation}
    \begin{split}
        \frac{1}{T}\sum_{t=1}^T \Vert \nabla \mathcal{L}(W^{(t)}) \Vert^{2}_{2} &\leq \frac{1}{\left( \eta\beta - \frac{1}{2}\eta V^2 \right)T}  \left( \mathcal{L}(W^{(1)}) - \mathcal{L}(W^{(*)}) \right) \\
        &+ \left( \eta\beta - \frac{1}{2}\eta V^2 \right)^{-1} \cdot \left(\frac{1}{2}\eta^3 K^2 L_{f}^2 C_{f}^2 + (1+V)\eta^2 K C_{f}^2 L_f \right) \\
        &\leq \frac{1}{\eta TB}\Big(\mathcal{L}(W^{(1)})-\mathcal{L}(W^{(*)})\Big) + \frac{P(\eta)}{B},
    \end{split}
\end{equation}
where $B=\beta-\frac{V^2}{2}$ and $P(\eta)=\frac{1}{2}\eta^2 K^2 C_{f}^2 L_{f}^2+(1+V)\eta KC_{f}^2 L_f$.
\end{proof}

\end{document}